%
%
%
%
%
%
\RequirePackage{fix-cm}
\documentclass[smallextended]{svjour3}       
\smartqed  

\usepackage[latin1]{inputenc}
\usepackage[english]{babel}
\usepackage{amsmath}
\usepackage{amsfonts}
\usepackage{amssymb}
\usepackage{color}
\usepackage{hyperref} 
\usepackage{graphicx}
\usepackage{caption}
%
%
%
\newcommand{\PP}{\mathbb{P}}

\newcommand{\RR}{\mathbb{R}}

\newcommand{\CC}{\mathbb{C}}
\newcommand{\NN}{\mathbb{N}}
\newcommand{\B}[1]{\mathbf{#1}}

\newcommand{\om}{\mathbf{\Omega}}

\newcommand{\MM}{\mathbf{M}}

\newcommand{\supp}{\mathrm{supp}}
\newcommand{\proj}{\mathrm{proj}}
\newcommand{\rank}{\mathrm{rank}}
\newcommand{\HF}{\mathrm{HF}}
\newcommand{\lgl}{<_{gl}}
\newcommand{\leqgl}{\leq_{gl}}

\def\x{\mathbf{x}}
\def\y{\mathbf{y}}

\def\B{\mathbf{B}}

\def\R{\mathbb{R}}
\def\om{\mathbf{\Omega}}

\def\p{\mathbf{p}}
\def\q{\mathbf{q}}
\def\v{\mathbf{v}}
\def\u{\mathbf{u}}
\def\z{\mathbf{z}}
\def\e{\mathbf{e}}
\def\a{\mathbf{a}}
\def\umeas{\lambda}

\newtheorem{defn}[theorem]{Definition}
\newtheorem{assumption}[theorem]{Assumption}




%
%
\begin{document}

\title{Data analysis from empirical moments and the Christoffel function}



\author{Edouard Pauwels	\and Mihai Putinar \and Jean-Bernard Lasserre}


\institute{Edouard Pauwels \at
							IRIT, CNRS, AOC, Universit\'e de Toulouse. DEEL, IRT Saint Exup\'ery, France.
              \email{epauwels@irit.fr}. Phone contact: +33561556351.           
           \and
           Mihai Putinar \at
              University of California, Santa Barbara, US, \  and \ Newcastle University, Newcastle upon Tyne, UK.
           \and
           Jean-Bernard Lasserre \at
           		LAAS-CNRS and Institute of Mathematics, University of Toulouse, LAAS, 7 avenue du Colonel Roche 31077 Toulouse, France.
}

\date{Received: date / Accepted: date}

\maketitle

\paragraph{Mathematics Subject Classification:} 62G05, 62G07,  68T05, 58C35, 46E22, 42C05, 47B32.
\paragraph{Keywords:} Christoffel-Darboux kernel, empirical measure, support inference, manifold, density estimation, reproducing-kernel hilbert spaces.\\
Communicated by Felipe Cucker.\\

\begin{abstract}
				Spectral features of the empirical moment matrix constitute a resourceful tool for unveiling properties of a cloud of points, among which, density, support and latent structures. This matrix is readily computed from an input dataset, its eigen decomposition can then be used to identify algebraic properties of the support or density / support estimates with the Christoffel function. It is already well known that the empirical moment matrix encodes a great deal of subtle attributes of the underlying measure. Starting from this object as base of observations we combine ideas from statistics, real algebraic geometry, orthogonal polynomials and approximation theory for opening new insights relevant for Machine Learning (ML) problems with data supported on algebraic sets. Refined concepts and results from real algebraic geometry and approximation theory are empowering a {\em simple} tool (the empirical moment matrix) for the task of solving non-trivial questions in data analysis. We provide (1) theoretical support, (2) numerical experiments and, (3) connections to real world data as a validation of the stamina of the empirical moment matrix approach.
\end{abstract}

\section{Introduction}
\subsection*{Inference of low dimensional structures}
Intrinsic dimension has a long history in signal processing expressing the idea that most empirical high dimensional signals are actually structured and can be approximated by a small number of entities \cite{bennet1969intrinsic,fukunaga1971algorithm,pettis1979intrinsic,trunk1968statistical}. Dimension reduction in data analysis has witnessed a considerable renewal of interest in the early 2000's with the advent of non linear low dimensional structure estimation algorithms \cite{roweis2000nonlinear,tenenbaum2000global,belkin2002laplacian} and follow up works \cite{donoho2003hessian,law2006incremental,elhamifar2011sparse}. Graph Laplacian based methods were proposed in \cite{belkin2005toward,hein2005graph} and intrinsic dimension estimation was revisited in \cite{levina2005maximum,hein2005intrinsic}. In a manifold learning context the questions of finite sample efficiency were treated in \cite{genovese2012minimax,genovese2012manifold,kim2015tight,aamari2018nonasymptotic} and a statistical test was proposed \cite{fefferman2016testing}. A spectral support estimator was described in \cite{devito2014learning}.

Getting access to topological properties of a data distribution through computational topology is of rising interest \cite{niyogi2008homology,edelsbrunner2008peristent,carlsson2009topology,ghrist2008barcodes}. This was cast in a statistical framework in \cite{chazal2011geometric,chazal2015convergence,bubenik2015statistical} and in a machine learning framework by \cite{niyogi2008homology,niyogi2011topological}. Our approach is distinct and rather complementary to these studies, by accepting from the very beginning some natural algebraic constraints on the random variables. To be more specific, we deal in a couple of examples included in this work with pairs of angles belonging to a torus, or a direction lying on a sphere, very close to familiar settings in classical mechanics.

Real algebraic geometry \cite{bochnak1998real} and its related computational tools \cite{cox2007ideals} constitute a well established mathematical branch dealing with varieties described by polynomials. Our work streams from the potential of exploiting these results with the specific aim at inferring latent structures. A fundamental view in modern algebraic geometry is to study a finite dimensional set $S$ via duality, by investigating the algebra of polynomial functions acting on $S$. We put at work the idea to capture geometric characteristics of $S$ from polynomials on a cloud of points spread across $S$, essentially a spectral property of the corresponding empirical moment matrix. Our approach is closely related to elements mentioned in \cite{breiding2018learning} which focuses on computational aspects, while we tackle statistical issues arising when working with finite samples.

\subsection*{Christoffel-Darboux kernel}
Given a positive, rapidly decaying Borel measure $\mu$ on euclidean space, integration with respect to $\mu$ defines a scalar product on the space of polynomials. The reproducing kernel associated to the Hilbert space of polynomials up to a given degree is called the Christoffel-Darboux kernel \cite{simon2008christoffel} and is usually computed from a suitable orthonormal family \cite{szego1974orthogonal,dunkl2001orthogonal}. This kernel only depends on the moments of $\mu$ and has been intensively studied for more than a century. It captures refined properties of the original measure \cite{nevai1986geza,van1987asymptotics} and allows to decode information concerning the support, the density of the absolutely continuous part of $\mu$ and more importantly it can detect the presence of a singular continuous component of $\mu$. The univariate case was treated in \cite{mate1980bernstein,mate1991szego,totik2000asymptotics}, while a recent application to dynamical systems is given in \cite{korda2017data}. 

In the multivariate setting, explicit computations are known only for simple supporting varieties \cite{bos1994asymptotics,xu1996asymptotics,bos1998asymptotics,xu1999asymptoticsSimplex} or under abstract assumptions \cite{kroo2012christoffel}. As claimed in \cite{nips,arxiv}, these tools constitute a promising research direction for data analysis. 
For latent structure inference one needs to understand properties of these objects in the \emph{singular} setting, hardly tackled in the literature, the main goal of the present paper is to tackle this issue for measures supported on algebraic sets. We refere to this situation as {\em singular} throughout the text since it results in rank deficiency of the moment matrix.

\subsection*{Results}
(i) Under natural assumptions, we show that the space of polynomials on a real algebraic set and the space of polynomials on a large enough generic sample on the same set are essentially the same spaces. As a result, global geometric properties of the support can be inferred \emph{with probability one} from only finite samples. This relies on the rigidity of the polynomial setting, and constitutes a significant departure from more classical forms of statistical inference which most often hold non deterministically. In particular, we describe
how the growth of the rank of the \emph{empirical} moment matrix is related to the dimension of the support of the underlying measure. Again, with probability one, the intrinsic dimension of the set of data points can be inferred exactly from the empirical moment matrix. 

(ii) Our second main result extends the weighted asymptotic convergence of the Christoffel function to the density of the underlying measure, cf. \cite{kroo2012christoffel}. The work of \cite{kroo2012christoffel} relies on the variational formulation of the Christoffel function, still valid in the singular case. Starting from simple assumptions, we cover the cases of the sphere, the ellipsoids and canonical operations of them, such as fibre products. The proof provides \emph{explicit} convergence rates in supremum norm,  which to the best of our knowledge, is the first result of this type in the multivariate setting.

Both results heavily rely on the fact that we have access to sample points on a lower dimensional algebraic set. In many practical situations, noise may lead to measurements close to such a set, but not exactly on this set. We argue that there are intrinsic connections between both situations using a continuity argument. We believe that understanding the more general noisy setting requires to understand the, sometimes ideal, noiseless setting. Our work constitutes a mandatory first step to handle such complex situations. Furthermore, we consider practical applications for which the data do live on a lower dimensional algebraic set by construction, in this setting our approach completely captures the difficulty of the problem. 

\subsection*{Numerical experiments} We illustrate on simulations that the growth of the rank of the empirical moment matrix is related to the dimensionality of the support of the underlying measure. Connection with the density is illustrated on three real world datasets featuring symmetry and periodicity. We map them to higher dimensional algebraic sets which captures naturally the corresponding symmetries and deploy the Christoffel function machinery illustrating how this simple tool can be used to estimate densities on the circle, the sphere and the bi-torus. Experiment details are found in the appendix.

\subsection*{Organisation of the paper}
Section \ref{sec:nonSingular} presents the Christoffel-Darboux kernel for positive definite moment matrices, this is ensured for example if the underlying measure is absolutely continuous. The singular case is tackled in Section \ref{sec:singular} where we provide a general construction and first describe its geometric features. Section \ref{sec:finiteSampleApprox} exposes our main results on approximation of Christoffel-Darboux kernels from finite samples and Section \ref{sec:sphere} describes our main result about relation with the underlying density with respect to a singular reference measure. Numerical experiments are presented in Section \ref{sec:numerics} and a discussion about slightly perturbed singular measures is given in Section \ref{sec:discussion}.
All proof arguments are postponed to Section \ref{sec:proofs}; the Appendix contains additional details on notation, numerical experiments and some technical Lemmas. 

The rather odd structure of the article was dictated by our primary aim of making accessible the main results to a wider audience. In this respect, some technical proofs and references to higher mathematics can be omitted at a first reading. On the other hand experts in algebraic geometry or approximation theory will find complete and sometimes redundant indications of the proofs of all claimed statements.

\section{The non-degenerate case}
\label{sec:nonSingular}
Henceforth we fix the dimension of the ambient euclidean space to be $p$. For example, we will consider vectors in $\RR^p$ as well as $p$-variate polynomials with real coefficients. 
\subsection{Notations and definition}
\label{sec:notations}

We denote by $\RR[X]$ the algebra of $p$-variate polynomials with real coefficients. 
For $d \in \NN$, $\RR_d[X]$ stands for the set of $p$-variate polynomials of degree less than or equal to $d$. We set $s(d) = {p+d \choose d}  = \dim \RR_d[X]$. 

Let $\v_d(X)$ denotes the vector of monomials of degree less than or equal to $d$.
A polynomial $P\in\RR_d[X]$ can be written as $P(X) = \left\langle\p, \v_d(X)\right\rangle$ for a given $\p  \in \RR^{s(d)}$.  
For a positive Borel measure $\mu$ on $\R^p$ denote by ${\rm supp}(\mu)$ its support, i.e., the smallest closed subset whose complement has measure zero.
We will only consider rapidly decaying measures, that is measures whose all moments are finite.

\paragraph{Moment matrix}
For any $d \in \NN$, the moment matrix of $\mu$, with moment up to $2d$, is given by 
$$\MM_{\mu,d} = \int_{\RR^p} \v_d(\x) \v_d(\x)^T d\mu(\x)$$ 
where the integral is understood element-wise. Actually it is useful to interpret the moment matrix as representing the bilinear form on $\RR[X]$, $\left\langle \cdot, \cdot\right\rangle_\mu \colon (P,Q) \mapsto \int PQd\mu$,
restricted to polynomials of degree up to $d$. Indeed, if $\p,\q \in \RR^{s(d)}$ are the vectors of coefficients of any two polynomials $P$ and $Q$ of degree up to $d$, one has $\p^T \MM_{\mu,d} \q =  \left\langle P, Q\right\rangle_\mu$.
This entails that $\MM_{\mu,d}$ is positive semidefinite for all $d \in \NN$.

\subsection{Christoffel-Darboux kernel and Christoffel function}
\label{sec:constructionAbsCont}
In this section we assume that the probability measure $\mu$ is not supported by a proper real algebraic subset of $\RR^p$. 
In other words, for any polynomial $p$:
$$ \int p^2 d\mu = 0 \ \ {\rm if \ and \ only \ if} \ \ p =0.$$
This non-degeneracy condition is assured for instance for an absolutely continuous measure with respect to Lebesgue measure on $\RR^p$.

Fix $d$ a positive integer. In this case, the bilinear form $\left\langle \cdot, \cdot\right\rangle_\mu$ is positive definite on $\RR_d[X]$ which is a finite dimensional Hilbert space; we denote by $\|\cdot\|_\mu$ the corresponding norm. For any $\x \in \RR^p$, the evaluation functional on $\RR_d[X]$, $P \mapsto P(\x)$, is continuous with respect to $\|\cdot\|_\mu$. Hence $\RR_d[X]$ is a Reproducing Kernel Hilbert Space (RKHS), it admits a unique reproducing kernel, $\kappa_{\mu,d}$ \cite{aronszajn1950theory}, called the {\it Christoffel-Darboux kernel} \cite{simon2008christoffel}. 
The reproducing property and definition of $\MM_{\mu,d}$ ensure that for any $\x,\y \in \RR^p$,
\begin{align}
				\kappa_{\mu,d}(\x,\y) = \v_d(\x)^T \MM_{\mu,d}^{-1} \v_d(\y).
				\label{eq:christoffelDarboux}
\end{align}
The {\it Christoffel function} is defined as follows:
\begin{align}
				\Lambda_{\mu,d} \colon \x \mapsto \inf_{P \in \RR_d[X]} \left\{ \int P(\z)^2 d\mu(\z), \quad\mathrm{ s.t. }\quad P(\x) = 1 \right\} 
				\label{eq:christoffel}
\end{align}
A well known crucial link between the Christoffel-Darboux kernel and the Christoffel function is given by the following Lemma of which we give a short proof for completeness.
\begin{lemma}
				For any $\x \in \RR^p$, one has
				\begin{align}
								\kappa_{\mu,d}(\x,\x) \Lambda_{\mu,d}(\x) = 1,
								\label{eq:fundamentalIdentity}
				\end{align}
				and the solution of \eqref{eq:christoffel} is given by $P^*\colon \z \mapsto \frac{\kappa_{\mu,d}(\x,\z)}{\kappa_{\mu,d}(\x,\x)}$.
				\label{lem:fundamentalIdentity}
\end{lemma}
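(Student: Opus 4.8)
The plan is to exploit the RKHS structure directly. Since $\RR_d[X]$ with $\langle\cdot,\cdot\rangle_\mu$ is a finite-dimensional Hilbert space and the evaluation functional $\mathrm{ev}_\x\colon P\mapsto P(\x)$ is continuous, the reproducing property says $P(\x)=\langle P,\kappa_{\mu,d}(\x,\cdot)\rangle_\mu$ for all $P\in\RR_d[X]$. I would first rewrite the optimization problem \eqref{eq:christoffel} as minimizing $\|P\|_\mu^2$ over the affine subspace $\{P\in\RR_d[X]:\langle P,\kappa_{\mu,d}(\x,\cdot)\rangle_\mu=1\}$, which is a classical least-norm problem with a single linear constraint.

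The key step is to solve this least-norm problem. The standard argument: among all $P$ satisfying $\langle P,k_\x\rangle_\mu=1$ (writing $k_\x:=\kappa_{\mu,d}(\x,\cdot)$), the one of minimal norm is the multiple of $k_\x$ itself, namely $P^* = k_\x/\|k_\x\|_\mu^2$. One sees this by decomposing any feasible $P$ as $P = \alpha k_\x + R$ with $R\perp k_\x$; the constraint forces $\alpha\|k_\x\|_\mu^2=1$, hence $\alpha$ is fixed, and $\|P\|_\mu^2 = \alpha^2\|k_\x\|_\mu^2 + \|R\|_\mu^2$ is minimized exactly when $R=0$. Then $\Lambda_{\mu,d}(\x) = \|P^*\|_\mu^2 = 1/\|k_\x\|_\mu^2$. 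Finally I would identify $\|k_\x\|_\mu^2$ with $\kappa_{\mu,d}(\x,\x)$: by the reproducing property applied to the function $k_\x$ itself, $\|k_\x\|_\mu^2 = \langle k_\x,k_\x\rangle_\mu = k_\x(\x) = \kappa_{\mu,d}(\x,\x)$. This gives $\Lambda_{\mu,d}(\x)\,\kappa_{\mu,d}(\x,\x)=1$, which is \eqref{eq:fundamentalIdentity}, and substituting back, $P^*(\z) = \kappa_{\mu,d}(\x,\z)/\kappa_{\mu,d}(\x,\x)$, which is the claimed minimizer.

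One small point worth a line of care: I should note $\kappa_{\mu,d}(\x,\x)>0$, so that division is legitimate and the constraint set is nonempty. This follows because the constant (or any) polynomial does not vanish identically, and more directly from positive definiteness of $\MM_{\mu,d}$ together with the formula \eqref{eq:christoffelDarboux}: $\kappa_{\mu,d}(\x,\x) = \v_d(\x)^T\MM_{\mu,d}^{-1}\v_d(\x) > 0$ since $\v_d(\x)\neq 0$ (its first entry is $1$) and $\MM_{\mu,d}^{-1}$ is positive definite. Alternatively, $\kappa_{\mu,d}(\x,\x)=\|k_\x\|_\mu^2$, and $k_\x=0$ would force $P(\x)=0$ for all $P$, contradicting $1\in\RR_d[X]$.

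I do not anticipate a genuine obstacle here — the statement is essentially the dual pairing between the variational (Christoffel function) and kernel (Christoffel-Darboux) descriptions, and the whole argument is three or four lines once the least-norm lemma is invoked. The only thing to be slightly careful about is keeping the roles of the two arguments of $\kappa_{\mu,d}$ straight and making sure the feasibility/positivity remark is in place before dividing.
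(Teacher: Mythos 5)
Your argument is correct and essentially the same as the paper's: both exploit the reproducing property to rewrite the constraint as $\langle P,\kappa_{\mu,d}(\x,\cdot)\rangle_\mu=1$ and then solve the resulting least-norm problem in the RKHS. You carry this out by orthogonal decomposition onto $\mathrm{span}\{\kappa_{\mu,d}(\x,\cdot)\}$, while the paper invokes Cauchy--Schwarz and checks equality at $P^*$; these are the same Hilbert-space mechanism in two guises, so there is no substantive difference.
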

\begin{proof}[Proof of Lemma \ref{lem:fundamentalIdentity}]
				If $P$ is feasible for \eqref{eq:christoffel},  then $\int P(\x) \kappa_{\mu,d}(\z,\x) d\mu(\x) = P(\z) = 1$. Cauchy-Schwartz inequality yields
				\begin{align*}
								1 \leq \int (P(\x))^2  d\mu(\x) \int (\kappa_{\mu,d}(\z,\y))^2 d\mu(\y) = \kappa_{\mu,d}(\z,\z) \int (P(\x))^2  d\mu(\x).
				\end{align*}
				Choosing $P = P^*$, we obtain equality in the above inequality which proves the desired result.
\end{proof}
\section{The singular case}
\label{sec:singular}
Throughout this section $\mu$ denotes a Borel probability measure possessing all moments. The word {\em singular} refers to rank deficiency of the moment matrix, a consequence of the support of $\mu$ being a subset of an algebraic set.
\subsection{Departure from the regular case}
Relations \eqref{eq:christoffelDarboux} and  \eqref{eq:fundamentalIdentity} do not need to hold in the singular case as the bilinear form $\left\langle \cdot,\cdot\right\rangle_\mu$ may fail to be positive definite. In this situation the definition of polynomial bases requires additional care and the RKHS interpretation is not valid over the whole space $\RR^p$. There are essentially two ways to circumvent this difficulty:
\begin{itemize}
				\item Construct, using a quotient map, the space of polynomials restricted on $\supp(\mu)$, define the kernel based on this family and extend it to the whole euclidean space.
				\item Consider only the variational formulation as in the left hand side of \eqref{eq:christoffel}.
\end{itemize}
These are equivalent on the support of $\mu$ but do not lead the same definition outside of the support. Identity \eqref{eq:fundamentalIdentity} ceases to hold in the singular case and both constructions provide a valid extension outside of the support. We will focus on the second choice and start by stating the following basic Lemma. 
\begin{lemma}
				Let $p\in\NN^*$, $\MM \in \RR^{p\times p}$ be symmetric semidefinite and $\u \in \RR^p$. Let $\MM^\dagger$ denotes the Moore-Penrose pseudo inverse, we have
				\begin{align*}
								\inf_{\x \in \RR^p} \left\{\x^T \MM \x; \ \x^T\u = 1    \right\} \qquad=\qquad 
								\begin{cases}
												\qquad\frac{1}{\u^T \MM^\dagger \u},\  \text{if } \quad\proj_{\ker(\MM)} (u) = 0,\\
												\qquad 0,\, \text{otherwise}\\
								\end{cases}
				\end{align*}
				\label{lem:pseudoInv}
\end{lemma}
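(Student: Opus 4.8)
The plan is to reduce everything to a few standard facts about the Moore--Penrose pseudo-inverse of a symmetric matrix: that $\MM\MM^\dagger = \MM^\dagger\MM$ is the orthogonal projector onto $\mathrm{range}(\MM) = \ker(\MM)^\perp$, that $\MM^\dagger \MM \MM^\dagger = \MM^\dagger$, and that $\MM^\dagger$ is itself positive semidefinite with the same range as $\MM$. First I would dispose of the case $\proj_{\ker(\MM)}(\u) \neq 0$: putting $\v := \proj_{\ker(\MM)}(\u)$ we have $\v \neq 0$, $\MM\v = 0$ and $\u^T\v = \|\v\|^2 > 0$, so the point $\x := \v/\|\v\|^2$ is feasible and gives $\x^T\MM\x = 0$; since $\MM \succeq 0$ makes the objective nonnegative on the whole feasible set, the infimum is $0$ and attained.

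For the case $\proj_{\ker(\MM)}(\u) = 0$, equivalently $\u \in \mathrm{range}(\MM)$, I would first deal with the trivial subcase $\u = 0$, where the feasible set is empty and both sides are $+\infty$ under the convention $1/0 = +\infty$. Assuming $\u \neq 0$, positive semidefiniteness of $\MM^\dagger$ together with $\u \in \mathrm{range}(\MM^\dagger)$ forces $\u^T\MM^\dagger\u > 0$, so the candidate $\x^\star := \MM^\dagger\u/(\u^T\MM^\dagger\u)$ is well defined; a short computation using symmetry of $\MM^\dagger$ and $\MM^\dagger\MM\MM^\dagger = \MM^\dagger$ shows $(\x^\star)^T\u = 1$ and $(\x^\star)^T\MM\x^\star = 1/(\u^T\MM^\dagger\u)$, giving the upper bound on the infimum. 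For the matching lower bound I would take an arbitrary feasible $\x$, write $\x = \x^\star + \y$ with $\y := \x - \x^\star$ satisfying $\y^T\u = 0$, and expand
\[
  \x^T\MM\x = (\x^\star)^T\MM\x^\star + 2\,(\x^\star)^T\MM\y + \y^T\MM\y .
\]
The cross term is proportional to $\u^T\MM^\dagger\MM\y$, which equals $(\MM^\dagger\MM\,\u)^T\y = \u^T\y = 0$ because $\MM^\dagger\MM$ is the symmetric projector fixing $\u \in \mathrm{range}(\MM)$; hence $\x^T\MM\x = (\x^\star)^T\MM\x^\star + \y^T\MM\y \geq 1/(\u^T\MM^\dagger\u)$, which closes the argument.

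The computation itself is routine; the only points that require care are the degenerate situations (the empty feasible set when $\u = 0$, and ruling out $\u^T\MM^\dagger\u = 0$ when $0 \neq \u \in \mathrm{range}(\MM)$) and checking that the cross term genuinely vanishes. Both hinge on the single structural observation that $\MM^\dagger\MM$ and $\MM\MM^\dagger$ coincide with the orthogonal projection onto $\mathrm{range}(\MM)$, so once that is in place there is no real obstacle. An alternative, equally clean route is to diagonalise $\MM$ in an orthonormal eigenbasis, discard the coordinates in $\ker(\MM)$ (on which $\u$ has no component), and apply Cauchy--Schwarz to the resulting finite sum $\sum \lambda_i x_i^2$ under the constraint $\sum x_i u_i = 1$, which recovers the same value $1/(\u^T\MM^\dagger\u)$.
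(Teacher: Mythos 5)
The paper itself does not supply a proof of Lemma~\ref{lem:pseudoInv}; it is introduced with the phrase ``we \ldots start by stating the following basic Lemma'' and left to the reader as a standard fact about the Moore--Penrose pseudo-inverse. There is therefore no in-paper argument to compare against, and your proposal must be judged on its own.

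It is correct and complete. In the case $\proj_{\ker(\MM)}(\u)\neq 0$ you correctly compute $\u^T\v=\|\v\|^2>0$ for $\v=\proj_{\ker(\MM)}(\u)$, exhibit the feasible point $\v/\|\v\|^2$ with zero objective, and invoke $\MM\succeq 0$ to conclude the infimum is $0$. In the case $\u\in\mathrm{range}(\MM)$ you rightly dispose of $\u=0$ (empty feasible set, infimum $+\infty$, consistent with the convention $1/0=+\infty$), rule out $\u^T\MM^\dagger\u=0$ via $\ker(\MM^\dagger)=\ker(\MM)$ and positive semidefiniteness of $\MM^\dagger$, verify feasibility and objective value of $\x^\star=\MM^\dagger\u/(\u^T\MM^\dagger\u)$ using $\MM^\dagger\MM\MM^\dagger=\MM^\dagger$, and obtain the matching lower bound by the orthogonal decomposition $\x=\x^\star+\y$ with the cross term killed by $\MM\MM^\dagger\u=\u$. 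Each of these steps relies only on the stated projector and symmetry identities for the pseudo-inverse, all standard, so there is no gap. One small bonus worth noting: the lemma as stated in the paper implicitly presupposes $\u\neq 0$ (the displayed formula $1/(\u^T\MM^\dagger\u)$ is otherwise meaningless), and your explicit treatment of the degenerate $\u=0$ subcase is cleaner than what the paper asserts.
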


\subsection{Support of the measure}

If $\MM_{\mu,d}$ does not have full rank, then this means that its support is contained in an algebraic set. 
We let $V$ be the Zariski closure of $\supp(\mu)$, that is the smallest real algebraic set which contains $\supp(\mu)$. It is well known that
an algebraic set is always equal to the common zero set of finitely many polynomials. Note that in the present article we exclusively work over the real field of coefficients, a setting which is more intricate than that offered by an algebraically closed field (for instance Hilbert's Nullstellensatz is not valid in its original form over a real field).
\begin{lemma}
				Denote by $\mathcal{I}$ the set of polynomials $P$ which satisfy $\int P^2 d\mu = 0$. 
				The set $\mathcal{I}$ coincides with the ideal of polynomials vanishing on $V$.
				\label{lem:radical}
\end{lemma}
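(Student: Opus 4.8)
The plan is to prove the two inclusions $\mathcal{I} \subseteq I(V)$ and $I(V) \subseteq \mathcal{I}$ separately, where $I(V)$ denotes the ideal of polynomials vanishing on $V$. First I would note that $\mathcal{I}$ is indeed an ideal: it is closed under addition because $\langle P+Q, P+Q\rangle_\mu \le 2\langle P,P\rangle_\mu + 2\langle Q,Q\rangle_\mu$ (or directly from the fact that $\mathcal{I}$ is the kernel of the positive semidefinite form $\langle\cdot,\cdot\rangle_\mu$, hence a linear subspace), and closed under multiplication by arbitrary polynomials because if $\int P^2\,d\mu = 0$ then $P = 0$ $\mu$-almost everywhere, so $PQ = 0$ $\mu$-almost everywhere and $\int (PQ)^2\,d\mu = 0$ for any $Q \in \RR[X]$. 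This almost-everywhere reformulation, $P \in \mathcal{I} \iff P = 0$ $\mu$-a.e., is the workhorse of the whole argument.

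For the inclusion $I(V) \subseteq \mathcal{I}$: if $P$ vanishes on $V$, then since $\supp(\mu) \subseteq V$ we have $P = 0$ on $\supp(\mu)$, and a continuous function vanishing on the support of a measure integrates to zero against that measure; applying this to $P^2$ gives $\int P^2\,d\mu = 0$, i.e. $P \in \mathcal{I}$. For the reverse inclusion $\mathcal{I} \subseteq I(V)$: let $P \in \mathcal{I}$, so $P = 0$ $\mu$-a.e., which means the set $\{P = 0\}$ has full $\mu$-measure. Its complement $\{P \ne 0\}$ is open with $\mu$-measure zero, so by definition of the support it is disjoint from $\supp(\mu)$; equivalently $\supp(\mu) \subseteq \{P = 0\}$. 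Now $\{P = 0\}$ is a real algebraic set containing $\supp(\mu)$, and $V$ is by definition the \emph{smallest} such set, hence $V \subseteq \{P = 0\}$, i.e. $P$ vanishes identically on $V$, which is exactly $P \in I(V)$.

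Finally I would record that $I(V)$ is automatically a radical ideal (if $Q^k$ vanishes on $V$ then so does $Q$), which matches the terminology "the ideal of polynomials vanishing on $V$" and is consistent with $\mathcal{I}$ being $\{P : P = 0 \text{ $\mu$-a.e.}\}$ — indeed one sees directly that $\mathcal{I}$ is radical by the same a.e.\ argument. The only subtlety, and the single point deserving care rather than a genuine obstacle, is the passage from "open of measure zero" to "disjoint from the support" and back: this is just the definition of $\supp(\mu)$ as the complement of the largest open null set, so no real difficulty arises. The argument is otherwise elementary and uses no algebraic geometry beyond the definition of the Zariski closure; in particular it does not invoke any Nullstellensatz, which is consistent with the cautionary remark in the text about the real field.
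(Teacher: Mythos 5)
Your proof is correct and follows essentially the same route as the paper: both directions are handled by combining continuity (to pass from $\int P^2\,d\mu=0$ to $P=0$ on $\supp(\mu)$) with the minimality of the Zariski closure $V$, and the reverse inclusion is the same observation that vanishing on $V\supseteq\supp(\mu)$ forces $\int P^2\,d\mu=0$. You merely spell out more explicitly the measure-theoretic step ($\{P\ne 0\}$ open and $\mu$-null, hence disjoint from $\supp(\mu)$) that the paper compresses into the phrase ``by continuity of $P$,'' and you add a remark on $\mathcal{I}$ being a radical ideal that is not needed for the statement.
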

We conclude this section by underlying important relations between the moment matrix and geometric properties of $V$. The proof combines \cite[Proposition 2]{laurent2012approach} and a well known identification of the dimension of a variety in the leading term of the associated  Hilbert's polynomial \cite[Chapter 9]{cox2007ideals}. 
An illustration of the relation between the growth of the rank of the moment matrix and the dimension of the underlying set is given in Section \ref{sec:numerics} with numerical simulations.
\begin{proposition}
				For all $d \in \NN^*$, $\rank(\MM_{\mu,d}) = \HF(d)$, where $\HF$ denotes the Hilbert function of $V$ and gives the dimension of the space of polynomials of degree up to $d$ on $V$. For $d$ large enough, $\rank(\MM_{\mu,d})$ is a polynomial in $d$ whose degree is the dimension of $V$ and $\ker(\MM_{\mu,d})$ provides a basis generating the ideal $\mathcal{I}$.
				\label{prop:idealRank}
\end{proposition}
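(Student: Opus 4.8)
The plan is to connect the rank of the empirical/moment matrix to the Hilbert function of the coordinate ring $\RR[X]/\mathcal I$, and then invoke the standard facts about Hilbert functions and Hilbert polynomials from commutative algebra. First I would observe, using Lemma \ref{lem:radical}, that $\mathcal I$ is precisely the (real radical) ideal of polynomials vanishing on $V = \overline{\supp(\mu)}^{Zar}$, and that the bilinear form $\langle\cdot,\cdot\rangle_\mu$ descends to a positive definite form on the quotient $\RR_d[X]/(\mathcal I \cap \RR_d[X])$. Concretely, $\p^T \MM_{\mu,d}\,\q = \langle P,Q\rangle_\mu$ depends only on the classes of $P$ and $Q$ modulo $\mathcal I$, so $\ker(\MM_{\mu,d})$ (identified with a subspace of $\RR_d[X]$ via coefficient vectors) equals $\mathcal I \cap \RR_d[X]$: one inclusion is immediate since $P\in\mathcal I$ forces $\p^T\MM_{\mu,d}\p = \int P^2 d\mu = 0$ hence $\MM_{\mu,d}\p = 0$ by positive semidefiniteness; the reverse inclusion is the same computation run backwards. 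This is essentially the content of \cite[Proposition 2]{laurent2012approach}.

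From $\ker(\MM_{\mu,d}) = \mathcal I\cap\RR_d[X]$ the rank statement follows by rank-nullity: $\rank(\MM_{\mu,d}) = s(d) - \dim(\mathcal I\cap\RR_d[X]) = \dim\big(\RR_d[X]/(\mathcal I\cap\RR_d[X])\big)$, and the latter is by definition the value $\HF(d)$ of the (affine) Hilbert function of $V$, i.e.\ the dimension of the space of polynomial functions of degree at most $d$ on $V$. Next I would recall the classical structure theorem \cite[Chapter 9]{cox2007ideals}: for $d$ sufficiently large the affine Hilbert function agrees with a polynomial in $d$ — the Hilbert polynomial — whose degree equals the Krull dimension of $\RR[X]/\mathcal I$, which is exactly the dimension of the algebraic set $V$. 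This gives the second assertion about $\rank(\MM_{\mu,d})$ being eventually polynomial of degree $\dim V$.

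For the final claim, that $\ker(\MM_{\mu,d})$ provides a generating set for $\mathcal I$, I would use the fact that $\mathcal I$ is a finitely generated ideal (Hilbert basis theorem), so there is a degree $d_0$ such that generators of $\mathcal I$ all have degree at most $d_0$; for every $d\ge d_0$, the finite-dimensional space $\mathcal I\cap\RR_d[X] = \ker(\MM_{\mu,d})$ then contains a generating set, and in fact any vector-space basis of it generates $\mathcal I$ as an ideal. I expect the main subtlety — and the point most in need of care rather than the rank-nullity bookkeeping — to be the interplay between the \emph{real} radical and the Zariski closure: one must be sure that $\mathcal I$ is genuinely the vanishing ideal $I(V)$ and that $V$ is the correct geometric object whose dimension and Hilbert polynomial are being read off, since over $\RR$ the Nullstellensatz fails in its classical form. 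This is exactly why Lemma \ref{lem:radical} is invoked as a black box; granting it, together with the cited results of \cite{laurent2012approach} and \cite{cox2007ideals}, the proposition assembles quickly.
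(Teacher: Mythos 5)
Your proof is correct and follows essentially the route the paper takes: identify $\ker(\MM_{\mu,d})$ with $\mathcal{I}\cap\RR_d[X]$ (the paper packages this step via the genericity Lemma~\ref{lem:genricity} and the citation to \cite[Proposition~2]{laurent2012approach}, but your direct two-inclusion argument establishes exactly the same fact), apply rank--nullity to read off $\HF(d)$, and then invoke the Hilbert-polynomial facts from \cite[Chapter~9]{cox2007ideals} together with the Hilbert basis theorem for the generating-set claim. Your remark that Lemma~\ref{lem:radical} is what guarantees $\mathcal{I}=I(V)$ over the reals, where the classical Nullstellensatz fails, correctly pinpoints the only genuine subtlety.
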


\subsection{Construction of the Christoffel-Darboux kernel}
We denote by $L_d^2(\mu)$ the space of polynomials on $V$ of degree up to $d$ with inner product and norm induced by $\mu$ denoted by $\langle\cdot,\cdot\rangle_\mu$ and $\|\cdot\|_\mu$ respectively. This quotient space can be obtained by identifying two polynomials which are equal on $V$. More specifically, $L_d^2(\mu)$ is the quotient space $\RR_d[X] / (\mathcal{I}\cap \RR_d[X])$, and it can be easily checked that $\langle\cdot,\cdot\rangle_\mu$ is positive definite on $L_d^2(\mu)$ and hence induces a genuine scalar product. 

As in the absolutely continuous setting, pointwise evaluation is continuous with respect to $\|\cdot\|_\mu$ and $L_d^2(\mu)$ is a RKHS \cite{aronszajn1950theory}. The symetric reproducing kernel is defined for all $\x \in V$, by $\kappa_{\mu,d}(\cdot,\x) \in L_d^2(\mu)$ such that for all $P \in L_d^2(\mu)$,
\begin{align*}
				\int P(\y) \kappa_{\mu,d}(\y,\x) d\mu(\y) = P(\x).
\end{align*}
We consider the variational formulation of the Christoffel function in \eqref{eq:christoffel}.
Considering the restriction of $\Lambda_{\mu,d}$ to $V$ amounts to replace $\RR_d[X]$ by $L_d^2(\mu)$, so that for all $\z \in V$
\begin{equation}
\label{def-christo-d}
\Lambda_{\mu,d}(\z)\,=\,\displaystyle\inf_{P\in L_d^2(\mu)}\,\left\{\,\int (P(\x))^2\,d\mu(\x):\: P(\z)\,=\,1\,\right\}.
\end{equation}
Note that \eqref{eq:fundamentalIdentity} still holds for all $\z \in V$, $\Lambda_{\mu,d}(\z) \kappa_{\mu,d}(\z,\z) = 1$.

By definition of the Zariski closure, a polynomial vanishing on $\supp(\mu)$ also vanishes on $V$ and we deduce that:
\begin{align}
				\label{eq:nonZeroV}
				\left\{ \z \in \RR^p,\, \Lambda_{\mu,d}(\z) = 0 \right\} \cap V = \emptyset.
\end{align}
Therefore if $\z\in V$ then $\Lambda_{\mu,d}(\z)>0$ and Lemma \ref{lem:pseudoInv} ensures that $\Lambda_{\mu,d}(\z)\,=\,(\v_d(\z)^T\MM_{\mu,d}^\dag\,\v_d(\z))^{-1}$.
Since $V$ is algebraic, it is the solution set of a finite number of polynomial equations. Let $d_V$ be the maximum degree of such equations.
Then for any $d \geq d_V$: $V = \left\{ \x \in \RR^p:\: \Lambda_{\mu,d}(\x) > 0 \right\}$.

\section{Finite sample approximation}
\label{sec:finiteSampleApprox}
Given a positive Borel measure $\mu$ one may consider a finite sample $\left\{ \x_i \right\}_{i=1}^n$ of independant random vectors drawn from $\mu$ and replace $\mu$ by its empirical counterpart $\mu_n = \frac{1}{n} \sum_{i=1}^n \delta_{\x_i}$ where $\delta_\a$ denotes the Dirac measure at $\a$. In this situation, for $i = 1,\ldots, n$, $\x_i$ is a random variable, hence $\mu_n$ is a random measure, $\MM_{\mu_n,d}$ is a random matrix, and the law of large numbers states that
for every fixed $d$,  $\|\MM_{\mu_n,d} - \MM_{\mu,d}\| \to 0$ almost surely.
This entails that $\rank(\MM_{\mu_n,d}) \to \rank(\MM_{\mu,d})$ almost surely. Classical approches to strengthen this result and provide quantitative estimates rely on concentration of measure for matrices or operators. In the present section, we investigate another direction and provide a different view of this convergence phenomenon.
{\em Under suitable assumptions, the rigidity of polynomials ensure that convergence of the rank occurs almost surely for a finite sample size.} 
				
\subsection{Assumptions and main result}
Recall that $\mu$ is a Borel probability measure on $\RR^p$ with all its moments finite and $V$ denote the Zariski closure of $\supp(\mu)$. 
An algebraic set is called irreducible if it is not the union of two algebraic proper subsets. Given an algebraic set $W$, we define a canonical area measure $\sigma_W$. Intuitively, an algebraic set consists of a finite union of a smooth submanifolds of dimension $p_W$ "glued" along a lower dimensional singular locus. The area measure is constructed with geometric integration theory techniques on the smooth part.

\paragraph{Construction of the area measure:} Let $W \subset \RR^p$ be an irreducible real algebraic set of dimension $p_W$. According to Proposition 3.3.14, Proposition 3.3.10 and Definition 3.3.4 in \cite{bochnak1998real}, there exists a lower dimensional algebraic subset $Y$ of dimension strictly smaller than $p_W$, such that the set $Z = W \setminus Y$ can be seen as a $p_W$ dimensional smooth submanifold of $\RR^p$ (possibly not connected). There is a natural Euclidean $p_W$ dimensional density on $Z$ induced from the euclidean embedding, see for example Section 7.3 in \cite{duistermaat2004multi}. This defines integration of continuous functions on $Z$; then Riesz representation theorem yields a regular positive Borel measure representing this very integration linear functional. The resulting measure is called the {\it area measure} of $W$ and it is denoted in short by $\sigma_W$.


\begin{assumption}
				$\mu$ is a Borel measure on $\RR^p$ with finite moments and $V$ is the Zariski closure of $\supp(\mu)$ endowed with the area measure  $\sigma_V$. They satisfy the following constraints:
				\begin{enumerate}
								\item $V$ is an irreducible algebraic set,
								\item $\mu$ is absolutely continuous with respect to $\sigma_V$.
				\end{enumerate}
				\label{ass:measReg}
\end{assumption}
Under this assumption one infers the following finite sample stabilization result. The statement formalizes the intuition that the set of polynomials on $\supp(\mu)$ and the set of polynomials on a finite sample are both isomorphic to the set of polynomials on $V$, almost surely with respect to the random draw of the sample, as long as the sample size reaches the dimension of the space $L_d^2(\mu)$.

\begin{theorem}
				For all $d \geq 1$ and all $n \geq \rank(\MM_{\mu,d})$, it holds almost surely that, 
				\begin{itemize}\itemsep.5em
								\item $\rank(\MM_{\mu_n,d}) = \rank(\MM_{\mu,d})$, 
								\item $\left\{ P \in \RR_d[X];\; \sum_{i=1}^n (P(\x_i))^2 = 0 \right\} = \mathcal{I}_d$.
				\end{itemize}
				\label{th:finiteSample}
\end{theorem}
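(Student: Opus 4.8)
\medskip
\noindent\textbf{Proof plan.}
The plan is to recast the two bullets as a single subspace inclusion, reduce that inclusion to the non-vanishing of one determinant, and finally settle the latter by a Fubini induction resting on a single geometric fact. Identify each $P\in\RR_d[X]$ with its coefficient vector $\p\in\RR^{s(d)}$. Since $\p^{T}\MM_{\mu_n,d}\,\p=\tfrac1n\sum_{i=1}^{n}P(\x_i)^{2}$ and $\MM_{\mu_n,d}$ is positive semidefinite, $\ker\MM_{\mu_n,d}$ is precisely the space of $P\in\RR_d[X]$ vanishing at all sample points, so the second bullet says $\ker\MM_{\mu_n,d}=\mathcal I_d$, where $\mathcal I_d:=\mathcal I\cap\RR_d[X]$. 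Almost surely $\x_i\in\supp(\mu)\subseteq V$, so by Lemma \ref{lem:radical} every $P\in\mathcal I_d$ vanishes at every $\x_i$; hence $\mathcal I_d\subseteq\ker\MM_{\mu_n,d}$ unconditionally, while by Proposition \ref{prop:idealRank} one has $\dim\mathcal I_d=s(d)-\rank(\MM_{\mu,d})$. Combined with rank--nullity this shows that the two bullets are equivalent, each amounting to the reverse inclusion $\ker\MM_{\mu_n,d}\subseteq\mathcal I_d$; and since that left-hand side only shrinks as $n$ grows, it suffices to treat $n=r:=\rank(\MM_{\mu,d})=\dim L_d^{2}(\mu)$, the resulting almost sure event then giving the statement for all $n\ge r$ at once. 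Fix a basis $\bar Q_1,\dots,\bar Q_r$ of $L_d^{2}(\mu)$ and lifts $Q_1,\dots,Q_r\in\RR_d[X]$. As $\x_i\in V$, the value $P(\x_i)$ depends only on the class of $P$ in $L_d^{2}(\mu)$, so for $n=r$ the inclusion $\ker\MM_{\mu_n,d}\subseteq\mathcal I_d$ holds exactly when $\bar P\mapsto(P(\x_1),\dots,P(\x_r))$ is injective on $L_d^{2}(\mu)$, i.e.\ when the $r\times r$ matrix $[\,Q_j(\x_i)\,]_{i,j=1}^{r}$ is invertible. Thus everything reduces to
\begin{equation*}
\mu^{\otimes r}\Big(\big\{\,(x_1,\dots,x_r)\in(\RR^{p})^{r}\ :\ \det[\,Q_j(x_i)\,]_{i,j=1}^{r}=0\,\big\}\Big)=0 .
\end{equation*}

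The one geometric input is this consequence of Assumption \ref{ass:measReg}: \emph{every proper algebraic subset $W\subsetneq V$ satisfies $\mu(W)=0$}. Indeed, $V$ being irreducible of some dimension $p_V$, such a $W$ has dimension $<p_V$ and is therefore negligible for the $p_V$-dimensional area measure $\sigma_V$ (cf.\ \cite{bochnak1998real}), whence $\mu(W)=0$ by absolute continuity. Granting this, I would prove by induction on $m\ge1$ the assertion $(\star_m)$: \emph{if a polynomial $g$ on $(\RR^{p})^{m}$ does not vanish identically on $V^{m}$, then $\mu^{\otimes m}(\{g=0\})=0$}. The base case $m=1$ is the displayed fact applied to $W=\{g=0\}\cap V$. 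For the inductive step, integrating over the first block $x_1$ by Fubini yields
\begin{equation*}
\mu^{\otimes m}(\{g=0\})\ \le\ \mu^{\otimes(m-1)}\big(\{\,(x_2,\dots,x_m)\ :\ g(\,\cdot\,,x_2,\dots,x_m)\equiv0\ \text{on}\ V\,\}\big),
\end{equation*}
because whenever $g(\,\cdot\,,x_2,\dots,x_m)\not\equiv0$ on $V$ the set $\{x_1:g(x_1,\dots,x_m)=0\}\cap V$ is a proper subvariety of $V$, hence $\mu$-null. Expanding the function $x_1\mapsto g(x_1,\dots,x_m)$ on $V$ in a fixed basis of $L^{2}_{d_1}(\mu)$ (with $d_1$ the degree of $g$ in the $x_1$-variables), its coefficients $e_l(x_2,\dots,x_m)$ are \emph{polynomials} in $(x_2,\dots,x_m)$ --- obtained by interpolation at a fixed finite sample of $V$ on which the relevant evaluation matrix is invertible --- and $g(\,\cdot\,,x_2,\dots,x_m)\equiv0$ on $V$ iff all $e_l(x_2,\dots,x_m)=0$; in particular the set appearing in the second display is closed, hence Borel. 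If every $e_l$ vanished identically on $V^{m-1}$, then $g$ would vanish identically on $V^{m}$; so some $e_{l_0}\not\equiv0$ on $V^{m-1}$, and $(\star_{m-1})$ applied to $e_{l_0}$ bounds the right-hand side by $0$, completing the induction. Finally, $\det[\,Q_j(x_i)\,]_{i,j}$ is not identically zero on $V^{r}$: the restrictions $Q_1|_V,\dots,Q_r|_V$ are linearly independent, and linearly independent functions always admit a finite sample at which the evaluation matrix is invertible (otherwise expanding a maximal nonvanishing evaluation minor along one further row would produce a nontrivial dependence among the $Q_j|_V$). Applying $(\star_r)$ to this determinant gives the displayed identity, and hence the theorem.

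The main obstacle is concentrated in the geometric fact that proper subvarieties of $V$ carry no $\mu$-mass: it is precisely there that irreducibility of $V$, the construction of the area measure $\sigma_V$, and the hypothesis $\mu\ll\sigma_V$ are all needed, and it relies on real-algebraic-geometry input (the strict drop of dimension when passing to a proper subvariety, and the negligibility of a lower-dimensional set for the top-dimensional area measure). Apart from that, the single delicate point --- easy to get wrong --- is that every ``expand in coefficients'' step must be performed modulo the vanishing ideal $\mathcal I=I(V)$, not with bare polynomial coefficients, since a nonzero polynomial can restrict to the zero function on $V$.
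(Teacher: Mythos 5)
Your proof is correct, and it differs from the paper's in one substantive technical step. Both arguments reduce the claim to showing that a suitable $r\times r$ determinant (equivalently, the Gram determinant of the Christoffel--Darboux kernel at $r$ sample points) is $\mu^{\otimes r}$-a.s.\ nonzero, and both rest on the principle that a polynomial not vanishing identically on an irreducible real algebraic set has area-measure-null zero set. The paper applies that principle \emph{once}, directly on the product variety $V^{N(d)}$: this requires invoking irreducibility of products of irreducible real algebraic sets (\cite[Theorem 2.8.3]{bochnak1998real}) and the full strength of Lemma~\ref{lem:irreducibleVanish} (proved via R\"uckert's complex analytic Nullstellensatz and Serre's flatness results) in the product ambient space. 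You instead perform a Fubini induction $(\star_m)$ that needs the principle only in the base case $m=1$, i.e.\ on $V$ itself, and you argue it there by dimension drop (a proper subvariety of an irreducible $V$ has dimension $<p_V$, hence is $\sigma_V$-null by the Hausdorff-measure construction of $\sigma_V$). This sidesteps both the irreducibility-of-products theorem and the application of the complex-analytic machinery on the higher-dimensional product. Your observation that the coefficient functions $e_l$ are themselves polynomials (by interpolation against a fixed evaluation basis on $V$) is a clean way to secure both the measurability of the exceptional slice set and the inductive step; it also makes explicit the point the paper handles implicitly, that all expansions must be done modulo $\mathcal I=I(V)$. Both approaches quote the nontrivial real-algebraic-geometry input (dimension drop vs.\ the Zariski-density of a Euclidean neighborhood of a regular point), so neither is circular; yours is marginally more elementary in packaging, while the paper's one-shot determinant argument is shorter once the product-irreducibility lemma is granted.
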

A proof of Theorem \ref{th:finiteSample} is given in Section \ref{sec:finiteSampleApproxProof}.
It is the combination of  Theorem \ref{th:finiteSample} and Proposition \ref{prop:idealRank} which shows that, in principle, it is possible to fully characterize algebraic geometric properties of the Zariski closure of $\supp(\mu)$ from a finite sample, with probability one.

It is obvious that $\rank(\MM_{\mu_n,d}) \leq \rank(\MM_{\mu,d})$. The main idea of the proof is to show that rank deficiency implies that $\left\{ \x_i \right\}_{i=1}^n$ are in the zero locus of a polynomial $P$ on $V^n$ not vanishing everywhere on $V^n$. We then use the fact that a polynomial $P$ vanishing in a regular euclidean neighborhood in $V^n$ actually vanishe on the whole $V^n$. There are several known, and rather technical facts contributing to this statement. We simply mention them, referring the interested readert to some widely circulated texts.

First, dealing with an irreducible real algebraic set $W$ of dimension $k$ in euclidean space of dimension $m$, one defines its associated ideal $I(W)$ as the collection of all real polynomials vanishing on $W$. It is non-trivial to prove that the set $W$ possesses regular points, that is points $x \in W$ and elements $P_1,P_2, \ldots, P_{m-k} \in I(W)$ such that the rank of the jacobian matrix $\left[\frac{\partial P_i}{\partial x_j}]\right]$ is maximal, equal to $m-k$ in $B \cap W$, where $B$ is a euclidean ball centered at $x$, see Propositions 3.3.10 and 3.3.14 in \cite{bochnak1998real}.

Second, the euclidean ball $B \cap W$ is Zariski dense in the irreducible set $W$. While this fact is intuitive, its proof passes through complex variables. Specifically, one changes the base algebra and considers the ideal generated by $I(W)$ in the ring $A$ of complex analytic functions, in the same number of variables. There one invokes the familiar complex nullstellensatz for analytic functions and irreducibility of $W$ over the complex field, to infer that a real polynomial $h$ vanishing on $B \cap W$ belongs to the ideal $I(W)$. Finally flatness of completion of local rings allows to decide that $h \in I(W)$, see for instance \cite{serre1956geometrie}. 

We remark that the assumptions on $V$ and $\mu$ in \ref{ass:measReg} are mandatory for the validity of the above theorem. Two simple examples illustrate our claim.
\bigskip

\begin{example}
				Consider the following generative process in $\RR^2$: $x$, $y$ and $z$ are drawn independently from the uniform measure on $[-1,1]$. If $z \leq 0$, return $(0,y)$, otherwise return $(x, 0)$.

				The underlying measure $\mu$ is the ``uniform'' measure on the set $\left\{ (x,y) \in [-1,1]^2;\; xy=0 \right\}$ and hence we have $V = \left\{ (x,y) \in \RR^2;\; xy = 0 \right\}$. Draw an independent sample from $\mu$, $\left\{ (x_i,y_i) \right\}_{i=1}^n$, the event $y_i = 0$ for $i = 1,\ldots, n$ occurs with probability $1/2^n$. Hence there is a nonzero probability that our sample actually belong to the set $\tilde{V} = \left\{ (x,y) \in \RR^2;\; y=0 \right\}$. This will result in $\rank(\MM_{\mu_n,d}) < \rank(\MM_{\mu,d})$. Since this event holds with non zero probability, this shows that Theorem \ref{th:finiteSample} may not hold for any value of $n$ if $V$ is not irreducilble.
\end{example}

\begin{example}
				Similarly, absolute continuity is necessary. Consider for example a mixture between an absolutely continuous measure and a Dirac measure. There is a non zero probability that the sample only contains a singleton which will induce a rank deficiency in the moment matrix.
\end{example}
\section{Reference measure with uniform asymptotic behavior}
\label{sec:sphere}
This section describes an asymptotic relation between the Christoffel function and the density associated with the underlying measure. The proof is mostly adapted from \cite[Theorem 1.1]{kroo2012christoffel}, we provide explicit details and use a simplified set of assumptions which allows us to deal with singular measures.

\subsection{Main assumptions and examples}
Our main hypothesis is related to a special property of a reference measure whose existence is assumed. Assumption \ref{ass:densityEstimation} describes the fact that the Christoffel function of a reference measure $\lambda$ is asymptotically of the same order for all $z$ in $Z$. This property will in turn be used to establish a connection between the asumptotic of the Christoffel function of a measure $\mu$, such that $\mu$ and $\lambda$ are mutually absolutely continous, and the density $d\mu/d\lambda$ in Theorem \ref{th:asymptoticsUnitSphere}.
\begin{assumption}
				Let $Z$ be a compact subset of $\RR^p$ and assume that there exists a reference Borel probability measure $\umeas$ whose support is $Z$ and a polynomial function $N \colon \RR_+\mapsto \RR_+^*$ such that
				\begin{align*}
								\lim_{d \to \infty} \sup_{z \in Z} |N(d) \Lambda_{\umeas,d}(z) - 1| = 0.
				\end{align*}
				\label{ass:densityEstimation}
\end{assumption}
\begin{remark}
				The constant $1$ is arbitrary and could be replaced by a continuous and strictly positive function of $z$.
				\label{rem:arbitraryFun}
\end{remark}
Assumption \ref{ass:densityEstimation} requires that the support of $\lambda$ does not have a boundary in $Z$. Recall that $\int_{z \in Z} \kappa_{\lambda,d}(z,z) d\lambda(z) = \rank(\MM_{\lambda,d})$ is the dimension of the space of polynomials on $Z$ (modulo equality on $Z$) and hence $N(d)$ has to be proportional to this dimension. 

The principal example that fits Assumption \ref{ass:densityEstimation} is the area measure on the $p-1$ dimensional sphere in $\RR^p$, denoted $\mathbb{S}^{p-1}$, which is isotropic and hence which Christoffel function is constant.
The dimension $N(d)$ of the vector space of polynomials over $\mathbb{S}^{p-1}$ is given by:
\begin{align*}
			{p+d-1 \choose p-1} + {p+d-2 \choose p-1} = \left( 1 + \frac{2d}{p-1} \right) {d+p - 2 \choose p - 2}
\end{align*}
which grows with $d$ like $\frac{2d^{p-1}}{p-1}$ for a fixed value of $p$.
As a function of $d$ it is exactly the Hilbert polynomial associated with the real algebraic set $\mathbb{S}^{p-1}$. Here we may choose for $\umeas$ the rotation invariant probability measure 
on $\mathbb{S}^{p-1}$ (as normalized area measure). In this case $\Lambda_{\umeas,d}(\z)$ is proportional to $\frac{1}{N(d)}$ for all $\z \in \mathbb{S}^{p-1}$ and all $d$ by rotational invariance of both the sphere and the Christoffel function.\\

The case of the sphere is important because it helps 
construct many more situations which satisfy Assumption \ref{ass:densityEstimation}:
			\begin{itemize}\itemsep-.1em
							\item Product of spheres with products of area measures: the bi-torus in $\RR^4$ with the corresponding area measure.
							\item Affine transformations of such sets with the push forward of the reference measure with respect to the affine map; for instance the ellipsoid,
										\end{itemize}
or rational embeddings of the sphere in higher dimensional space, to mention only a few natural choices of admissible operations.

\subsection{Main result}
Given a reference measure $\umeas$ as in Assumption \ref{ass:densityEstimation}, and another measure $\mu \sim \umeas$ (i.e. $\umeas\ll\mu$ and $\mu\ll \umeas$), one can describe a precise relation with the underlying density.
\begin{theorem}
				Let $Z$ and $\umeas$ satisfy Assumption \ref{ass:densityEstimation}. Let $\mu$ be a Borel probability measure on $\RR^p$, absolutely continuous with respect to $\umeas$, with density $f\colon Z \to \RR_+^*$  which is continuous and positive.
				Then:
				$$\lim_{d \to \infty}\sup_{\x \in Z} |N(d)\Lambda_{\mu,d}(\x) - f(\x)| = 0.$$
				\label{th:asymptoticsUnitSphere}
\end{theorem}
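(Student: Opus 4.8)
The plan is to sandwich $N(d)\Lambda_{\mu,d}(\x)$ between $f(\x)(1\pm\varepsilon)$ multiples of the reference Christoffel function, exploiting both the variational characterization \eqref{def-christo-d} of $\Lambda_{\mu,d}$ and the uniform control on $\Lambda_{\umeas,d}$ from Assumption \ref{ass:densityEstimation}. Fix $\varepsilon>0$. Since $Z$ is compact and $f$ is continuous and strictly positive, $f$ is uniformly continuous and bounded away from $0$ and $\infty$ on $Z$; in particular there is $\delta>0$ such that $|f(\y)-f(\x)|\le\varepsilon f(\x)$ whenever $|\y-\x|\le\delta$, $\x,\y\in Z$.

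\emph{Upper bound.} Fix $\x\in Z$ and let $P^*$ be the optimizer of the reference problem \eqref{def-christo-d} for $\umeas$ at $\x$, so $P^*(\x)=1$ and $\int (P^*)^2\,d\umeas=\Lambda_{\umeas,d}(\x)$. This $P^*$ is feasible for the $\mu$-problem at $\x$, hence
$$\Lambda_{\mu,d}(\x)\le \int (P^*(\z))^2\,d\mu(\z)=\int (P^*(\z))^2 f(\z)\,d\umeas(\z).$$
To turn the weight $f$ into the constant $f(\x)$ one must localize: near $\x$ one uses $f(\z)\le (1+\varepsilon)f(\x)$, while far from $\x$ one needs to know that $(P^*)^2$ carries little $\umeas$-mass outside a $\delta$-ball. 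This is exactly the content of Lemma \ref{lem:fundamentalIdentity} combined with Assumption \ref{ass:densityEstimation}: the optimizer is the normalized reproducing kernel, $P^*(\z)=\kappa_{\umeas,d}(\x,\z)/\kappa_{\umeas,d}(\x,\x)$, and by the Christoffel--Darboux/Bernstein--Markov type localization (the key quantitative ingredient, to be borrowed from \cite{kroo2012christoffel} and made explicit here) the mass $\int_{|\z-\x|>\delta}(P^*)^2\,d\umeas$ decays like $o(\Lambda_{\umeas,d}(\x))$ as $d\to\infty$, uniformly in $\x\in Z$. Hence $\Lambda_{\mu,d}(\x)\le (1+\varepsilon)f(\x)\Lambda_{\umeas,d}(\x)+\|f\|_\infty\, o(\Lambda_{\umeas,d}(\x))$, and multiplying by $N(d)$ and using $N(d)\Lambda_{\umeas,d}(\x)\to 1$ uniformly gives $\limsup_{d\to\infty}\sup_{\x\in Z}(N(d)\Lambda_{\mu,d}(\x)-f(\x))\le \varepsilon\sup_Z f$.

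\emph{Lower bound.} Fix $\x\in Z$ and let $Q$ be any feasible competitor for the $\mu$-problem, $Q(\x)=1$. Then
$$\int (Q(\z))^2\,d\mu(\z)=\int (Q(\z))^2 f(\z)\,d\umeas(\z)\ge (1-\varepsilon)f(\x)\int_{|\z-\x|\le\delta}(Q(\z))^2\,d\umeas(\z).$$
The difficulty is the complementary region: $Q$ need not be concentrated near $\x$. Here one uses the reproducing property together with Cauchy--Schwarz to control the tail: writing $1=Q(\x)=\int Q(\z)\kappa_{\umeas,d}(\x,\z)\,d\umeas(\z)$, splitting the integral at $|\z-\x|=\delta$, and using the localization of $\kappa_{\umeas,d}(\x,\cdot)$ again, one shows $\big(\int_{|\z-\x|\le\delta}Q\kappa_{\umeas,d}(\x,\cdot)\,d\umeas\big)^2\ge (1-o(1))$, whence Cauchy--Schwarz yields $\int_{|\z-\x|\le\delta}Q^2\,d\umeas\ge (1-o(1))/\kappa_{\umeas,d}(\x,\x)=(1-o(1))\Lambda_{\umeas,d}(\x)$, uniformly in $\x$ and $Q$. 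Taking the infimum over $Q$ gives $\Lambda_{\mu,d}(\x)\ge (1-\varepsilon)(1-o(1))f(\x)\Lambda_{\umeas,d}(\x)$, and after multiplying by $N(d)$, $\liminf_{d\to\infty}\inf_{\x\in Z}(N(d)\Lambda_{\mu,d}(\x)-f(\x))\ge -\varepsilon\sup_Z f$. Combining the two bounds and letting $\varepsilon\downarrow 0$ finishes the proof.

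\emph{Main obstacle.} Everything hinges on the uniform localization estimate for the reference reproducing kernel: that $\kappa_{\umeas,d}(\x,\cdot)$, normalized in $L^2(\umeas)$, puts asymptotically all of its mass in an arbitrarily small neighborhood of $\x$, with the loss $o(1)$ uniform over $\x\in Z$. Assumption \ref{ass:densityEstimation} by itself only pins down the diagonal $\kappa_{\umeas,d}(\x,\x)=1/\Lambda_{\umeas,d}(\x)\sim N(d)$; deriving off-diagonal decay requires a Nikolskii/Bernstein--Markov inequality on $Z$ (comparing sup-norm and $L^2(\umeas)$-norm of polynomials of degree $d$ up to polynomial-in-$d$ factors), which holds for the sphere and is stable under the admissible operations (products, affine images, rational embeddings) listed after Assumption \ref{ass:densityEstimation}. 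Making this quantitative is what yields the explicit convergence rate in supremum norm advertised in the introduction; adapting the argument of \cite[Theorem 1.1]{kroo2012christoffel} to the singular setting is precisely the place where the quotient-space RKHS structure of $L^2_d(\mu)$ from Section \ref{sec:singular} must be used in place of the full $\RR_d[X]$.
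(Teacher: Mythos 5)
Your high-level sandwich strategy matches the paper's, but the mechanism you use to localize is different and contains a genuine gap. You propose to use the exact optimizer $P^* = \kappa_{\umeas,d}(\x,\cdot)/\kappa_{\umeas,d}(\x,\x)$ directly and then claim that $\int_{|\z-\x|>\delta}(P^*)^2\,d\umeas = o(\Lambda_{\umeas,d}(\x))$ uniformly, i.e.\ off-diagonal decay of the Christoffel--Darboux kernel. As you yourself flag, Assumption~\ref{ass:densityEstimation} only controls the diagonal $\kappa_{\umeas,d}(\x,\x)$, and off-diagonal decay does not follow from it. Your suggested fix --- invoke a Nikolskii/Bernstein--Markov inequality --- does not close the gap: such an inequality (which is in fact automatic here, see Lemma~\ref{lem:firstProperties}, with constant $\sup_{Z}\Lambda_{\umeas,d}^{-1}\sim N(d)$) bounds sup-norms by $L^2$-norms up to a polynomial-in-$d$ factor, but it gives no pointwise decay of $\kappa_{\umeas,d}(\x,\cdot)$ away from $\x$. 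Your lower bound likewise relies on the same unproven localization to control the far part of $\int Q\kappa_{\umeas,d}(\x,\cdot)\,d\umeas$ via Cauchy--Schwarz.

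The paper (following Kro\`o--Lubinsky) avoids needing kernel localization altogether. For the upper bound it takes the optimizer $P_{d-\lfloor d^\alpha\rfloor}$ at a \emph{slightly reduced} degree and multiplies it by the explicit ``needle'' polynomial $Q_{2\lfloor d^\alpha/2\rfloor}$ of Lemma~\ref{lem:needlePoly}, built from Chebyshev polynomials, which equals $1$ at $\x_0$ and is exponentially small outside a $D/\sqrt d$ ball. The product is feasible at degree $d$, and the tail is then killed by combining the exponential factor $2^{2-d^{\alpha-1/2}/2}$ with the Nikolskii bound $\sup_Z P_{d-\lfloor d^\alpha\rfloor}^2 \le \bigl(\sup_Z\Lambda_{\umeas,d-\lfloor d^\alpha\rfloor}^{-1}\bigr)\int P_{d-\lfloor d^\alpha\rfloor}^2\,d\umeas \sim N(d-\lfloor d^\alpha\rfloor)\Lambda_{\umeas,d-\lfloor d^\alpha\rfloor}(\x_0)$, a polynomially growing factor. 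Because $N$ is a polynomial (Assumption~\ref{ass:densityEstimation}) and $\alpha>1/2$, the exponential wins. The lower bound is then obtained by the symmetric argument with $\mu$ and $\umeas$ swapped, which converts an upper bound on $\Lambda_{\umeas,d+\lfloor d^\alpha\rfloor}$ into a lower bound on $\Lambda_{\mu,d}$, rather than via a Cauchy--Schwarz split against the kernel. The needle-polynomial route costs a degree budget of $\lfloor d^\alpha\rfloor$ (whence the rate $O(d^{\alpha-1})$ in Corollary~\ref{lem:explicitSphere}), but it is entirely self-contained from the stated assumption; that is the piece your proposal is missing. If you want to salvage your version, replace the claimed off-diagonal kernel decay by the needle-polynomial multiplier; otherwise you would have to \emph{add} a localization hypothesis on $\kappa_{\umeas,d}$ beyond Assumption~\ref{ass:densityEstimation}.
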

A proof of Theorem \ref{th:asymptoticsUnitSphere} is given in Section \ref{sec:sphere}.
In the case of the sphere, Assumption \ref{ass:densityEstimation} does not hold only in the limit and 
from the proof of Theorem \ref{th:asymptoticsUnitSphere} we are able
to obtain  explicit quantitative bounds and a convergence rate which could be arbitrarily close to $O(d^{-1/2})$. To the best of our knowledge, this is the first estimate of this type for the Christoffel function. 

\begin{corollary}
				Let $f \colon \RR^p \to \RR_+$ be Lipschitz on the unit ball with $0 < c \leq f \leq C < +\infty$ on $\mathbb{S}^{p-1}$ and assume that $\mu$ has density $f$ with respect to the uniform measure on the sphere. Then for anu $\alpha \in (1/2,1)$, $$\sup_{\x \in S^{p-1}} |N(d) \Lambda_{\mu,d}(\x) - f(\x)| = O(d^{\alpha - 1}).$$ 
				\label{lem:explicitSphere}
\end{corollary}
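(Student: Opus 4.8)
The plan is to run a two-sided \emph{localization} argument specific to the sphere, exploiting that on $\SS^{p-1}$ the reference measure makes Assumption~\ref{ass:densityEstimation} hold \emph{exactly}. Write $\lambda$ for the rotation-invariant probability measure on $\SS^{p-1}$ and $\mu=f\lambda$. By rotational invariance $\z\mapsto\kappa_{\lambda,d}(\z,\z)$ is constant on $\SS^{p-1}$, and integrating it against $\lambda$ gives $\rank(\MM_{\lambda,d})=N(d)$ (Proposition~\ref{prop:idealRank}); hence $\kappa_{\lambda,d}(\z,\z)=N(d)$ and $N(d)\Lambda_{\lambda,d}(\z)=1$ for every $\z\in\SS^{p-1}$ and every $d$, while the reproducing property gives $\int(\kappa_{\lambda,e}(\x,\cdot))^{2}\,d\lambda=N(e)$ for any degree $e$. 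I will use three further ingredients: (i) $N(d+m)/N(d)=1+O_p(m/d)$ for $1\le m\le d$ (both sides are polynomials of degree $p-1$); (ii) the fact that \eqref{def-christo-d} is solved by an honest polynomial and that multiplying a polynomial feasible for the level-$e$ problem by one of degree $m$ gives one feasible for the level-$(e+m)$ problem; and (iii) a \emph{needle polynomial} lemma: for each $\x\in\SS^{p-1}$ and $\delta,\eta\in(0,1)$ there is $R=R_{\x,\delta,\eta}\in\RR_m[X]$ with $m\le C_p\,\delta^{-1}\log(1/\eta)$, $R(\x)=1$, $0\le R\le1$ on $\SS^{p-1}$, and $R\le\eta$ on $\{\z\in\SS^{p-1}:|\z-\x|\ge\delta\}$; this comes from a univariate fast-decreasing polynomial (Ivanov--Totik) evaluated at $\langle\x,\cdot\rangle$, and rotational symmetry makes $m$ and the bounds uniform in $\x$.

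\medskip

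\noindent\textbf{Upper bound.} Fix $\x\in\SS^{p-1}$ and $\delta,\eta\in(0,1)$, and use as competitor in \eqref{def-christo-d} the polynomial $\tilde P:=R_{\x,\delta,\eta}\,\kappa_{\lambda,d-m}(\x,\cdot)/N(d-m)$, of degree $\le d$ with $\tilde P(\x)=1$. Split $\int\tilde P^{2}\,d\mu=\int\tilde P^{2}f\,d\lambda$ over the cap $B=\{\,|\z-\x|<\delta\,\}$ and its complement: on $B$ use $f\le f(\x)+L\delta$ and $R\le1$, off $B$ use $f\le C$ and $R\le\eta$, and in both pieces bound the remaining factor by $\int(\kappa_{\lambda,d-m}(\x,\cdot)/N(d-m))^{2}\,d\lambda=1/N(d-m)$. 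This yields $\Lambda_{\mu,d}(\x)\le\bigl(f(\x)+L\delta+C\eta^{2}\bigr)/N(d-m)$, hence $N(d)\Lambda_{\mu,d}(\x)\le\bigl(1+O_p(m/d)\bigr)\bigl(f(\x)+L\delta+C\eta^{2}\bigr)$. (Alternatively one may use $\kappa_{\lambda,d}(\x,\cdot)/N(d)$ directly together with the classical pointwise Gegenbauer estimates for the Christoffel--Darboux kernel on the sphere; that route gives the even faster rate $O(d^{-1}\log d)$ on this side, which is not the bottleneck.)

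\medskip

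\noindent\textbf{Lower bound.} Let $P\in\RR_d[X]$ attain $\Lambda_{\mu,d}(\x)$, so $P(\x)=1$ and $\int P^{2}\,d\mu=\Lambda_{\mu,d}(\x)$. From $f\le C$ we get $\Lambda_{\mu,d}(\x)\le C\,\Lambda_{\lambda,d}(\x)=C/N(d)$, and from $f\ge c$ we get $c\int P^{2}\,d\lambda\le\int P^{2}\,d\mu$, so $\int P^{2}\,d\lambda\le C/(cN(d))$. Now $Q:=R_{\x,\delta,\eta}P$ has degree $\le d+m$ and $Q(\x)=1$, hence is feasible for $\Lambda_{\lambda,d+m}(\x)$, so
\[
\frac{1}{N(d+m)}\ \le\ \int Q^{2}\,d\lambda\ \le\ \int_{B}P^{2}\,d\lambda+\eta^{2}\!\int P^{2}\,d\lambda\ \le\ \int_{B}P^{2}\,d\lambda+\frac{C\eta^{2}}{c\,N(d)} .
\]
Since $f\ge f(\x)-L\delta$ on $B$, $\Lambda_{\mu,d}(\x)\ge\int_{B}P^{2}f\,d\lambda\ge(f(\x)-L\delta)\int_{B}P^{2}\,d\lambda\ge(f(\x)-L\delta)\bigl(1/N(d+m)-C\eta^{2}/(c\,N(d))\bigr)$, whence $N(d)\Lambda_{\mu,d}(\x)\ge(f(\x)-L\delta)\bigl(1-O_p(m/d)-C\eta^{2}/c\bigr)$.

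\medskip

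\noindent\textbf{Optimizing and the main obstacle.} Choose $\eta=1/d$ (so $C\eta^{2}/c=O(d^{-2})$ and $m=O_p(\delta^{-1}\log d)$), and then $\delta=(d^{-1}\log d)^{1/2}$, which balances $L\delta$ against $O_p(m/d)=O_p(\delta^{-1}d^{-1}\log d)$, both being $O\!\bigl(d^{-1/2}(\log d)^{1/2}\bigr)$ (and $m\le d$ for $d$ large). Substituting into the two displays and using $c\le f\le C$ to convert the multiplicative errors into additive ones gives, uniformly over $\x\in\SS^{p-1}$ by rotational symmetry, $\bigl|N(d)\Lambda_{\mu,d}(\x)-f(\x)\bigr|=O\!\bigl(d^{-1/2}(\log d)^{1/2}\bigr)$; since $d^{-1/2}(\log d)^{1/2}=o(d^{\alpha-1})$ for every $\alpha\in(1/2,1)$, the corollary follows. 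The binding step is the lower bound: the needle polynomial must simultaneously localize to a cap of radius $\delta$ and have degree $m\ll d$, and the intrinsic cost $m\asymp\delta^{-1}\log(1/\eta)$ forces $\delta\asymp d^{-1/2}$, which is precisely why the exponent cannot be pushed below $-1/2$. The only genuinely technical input is the classical construction of fast-decreasing univariate polynomials and their transfer to $\SS^{p-1}$; spherical symmetry is what makes every constant uniform in $\x$ and is the special feature used here.
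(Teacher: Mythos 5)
Your argument is correct and follows the same overall strategy as the paper (Remark~\ref{rem:explicitSphere}, which instantiates the proof of Theorem~\ref{th:asymptoticsUnitSphere}): use isotropy of the uniform measure so that $N(d)\Lambda_{\lambda,d}\equiv 1$ exactly, localize with a needle polynomial of degree $m\ll d$ supported near a cap of radius $\delta$, and compare the degree-$d$ Christoffel function of $\mu$ to the degree-$(d\mp m)$ Christoffel function of $\lambda$. Your needle polynomial (fast-decreasing, degree $m\lesssim\delta^{-1}\log(1/\eta)$) has the same quantitative content as the paper's Chebyshev construction in Lemma~\ref{lem:needlePoly}, for which a tail level $\eta$ on $\{|\z-\x|\ge\delta\}$ costs degree $\asymp\delta^{-1}\log(1/\eta)$ as well.

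Where you genuinely differ is in the choice of parameters, and your choice is sharper. The paper fixes $\delta=d^{-1/2}$ and the needle degree at $\lfloor d^{\alpha}\rfloor$ with $\alpha\in(1/2,1)$, which makes the tail superpolynomially small but leaves the dominant error coming from $N(d)/N(d\mp\lfloor d^{\alpha}\rfloor)-1=O(d^{\alpha-1})$, outweighing the Lipschitz term $L\delta=O(d^{-1/2})$; hence $O(d^{\alpha-1})$. You instead take $\eta=1/d$ and $\delta=(d^{-1}\log d)^{1/2}$, which balances $L\delta$ and $m/d$ at $O\bigl(d^{-1/2}(\log d)^{1/2}\bigr)$, a strictly better rate that still implies $O(d^{\alpha-1})$ for every $\alpha\in(1/2,1)$. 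Your optimization thus shows the paper's statement is not tight for its own method: the same technique actually gives $d^{-1/2}$ up to a $\sqrt{\log d}$ factor. This is a minor but genuine improvement in the explicit bound, obtained within the same framework. (One small point: your lower bound uses that $1/f$ is also controlled, which the paper makes explicit in the remark; this follows automatically here from $f\ge c>0$ and $f$ Lipschitz, so your statement of hypotheses suffices.)
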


\section{Numerical experiments}
\label{sec:numerics}
\begin{figure}[t]
				\centering
				\includegraphics[width = \textwidth]{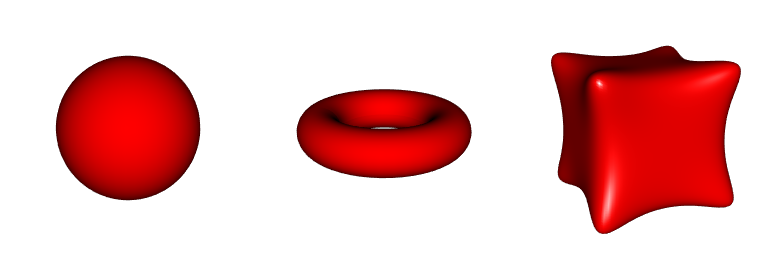}
				\includegraphics[width = \textwidth]{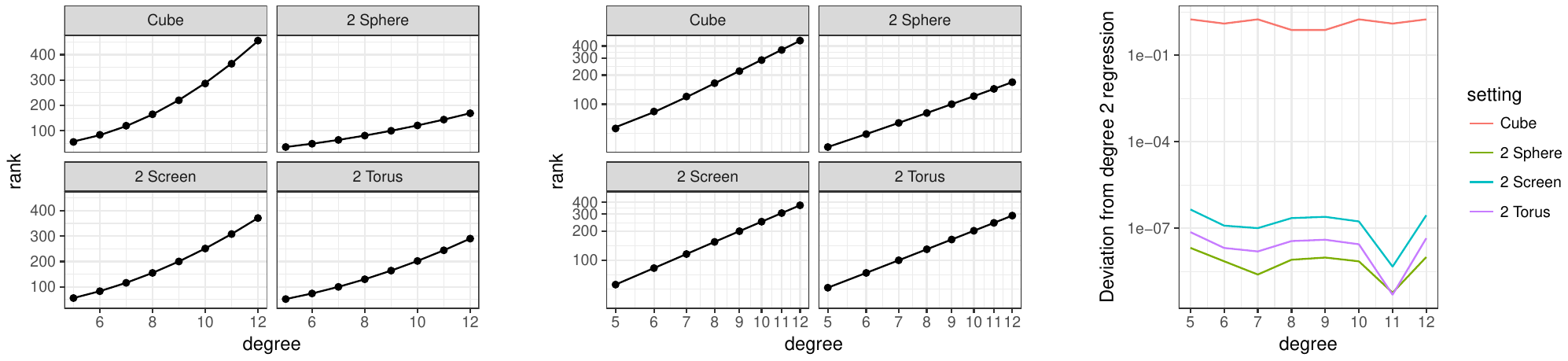}
				\caption{Top: A visualization of the 2 dimensional surfaces considered in this example, the sphere, the torus and the TV screen. Bottom: Relation between the rank of the moment matrix and the corresponding degree bound. For different sets, the dots represent the measured rank and the curve is the degree-two interpolation. On the left, the raw relation, we see that the cube has the highest rank. The same plot is in log log scale in the middle. The difference between measured rank and estimated degree-two interpolation is hardly visible. On the right, we represent the residuals between degree-two interpolation and measured ranks. The degree-rank relation is well interpolated for two dimensional sets while this is not the case for the cube.}
				\label{fig:rank}
\end{figure}
\subsection{Rank of the moment matrix}
We illustrate Proposition \ref{prop:idealRank} by the following numerical experiment:
\begin{itemize}\itemsep-.1em
				\item Sample $20000$ points on a chosen set $\Omega \subset \RR^3$, from a density with respect to the area measure on $\Omega$.
				\item For  $d = 5, \ldots, 12$, compute the rank of the empirical moment matrix: set $X \in \RR^{20000 \times s(d)}$ the design matrix representing each point in a polynomial basis (of size $s(d)$), such that the empirical moment matrix is given by $1/20000 X^TX$. We estimate the rank of $X$ using singular value thresholding (multivariate Chebyshev basis, threshold $10^{-10}$).
				\item Fit a degree-$2$ regression polynomial interpolating the relation between the degree and the rank.
\end{itemize}
We choose four different subsets of $\RR^3$: unit cube, unit sphere, TV screen, torus.
The first one is $3$-dimensional while all the others are 
$2$-dimensional (see Figure \ref{fig:rank}). From Proposition \ref{prop:idealRank}, 
in the first case it is expected that the computed rank grows like a third degree polynomial while for the remaining cases, it should grow like a quadratic. Hence the interpolation of the rank-degree relation should be of good quality for the last three cases and not for the first case. This is what we observed, see Figure \ref{fig:rank}.
\subsection{Density estimation on an algebraic set}
\label{sec:experiments}
We present multivariate datasets whose topological characteristics suggest to map them to algebraic sets capturing symmetries. Practitioners have developed density estimation tools which mostly rely on the ability to compute a distance like divergence between two points which respects the topology of the data. As we next show, a {\em unifying and generic approach} allows to treat all these cases using the same computational tool: the empirical Christoffel function. 

The first step consists in mapping the data of interest on an algebraic set whose topology reflects the intrinsic topology of the data, namely: the circle for periodic data, the sphere for celestial data, and the torus for bi-periodic data. Then, we evaluate the empirical Christoffel function on the chosen set and use it as a proxy to density. We use the pseudo-inverse of the empirical moment matrix and evaluate the Christoffel function on a grid through formulas \eqref{eq:christoffelDarboux} and \eqref{eq:fundamentalIdentity}. We then plot the contours of the estimate obtained on the grid to get a graphical representation of the estimated density.

The Christoffel function highly depends on the geometry of the boundary of the support. The algebraic sets considered here {\em do not} have boundaries (as manifolds) and isotropy properties ensure that the Christoffel function associated to the {\em uniform} measure on these sets is {\em constant}.

\subsection{Dragonfly orientation: the circle}
The dataset was described in \cite{batschelet1981circular} and consists of measurements of the orientation of 214 dragonflys with respect to the azimuth of the sun. The orientation is an angle which has a periodicity and as such is naturally mapped to the circle. The dataset and the corresponding Christoffel function are displayed in Figure \ref{fig:dragonflyStar}. As the degree increases, the Christoffel function captures regions densely populated by observations (high density) and regions without any observation (low density). As was already observed in \cite{batschelet1981circular}, dragonflies tend to sit in a direction perpendicular to the sun.
\subsection{Double stars: the sphere}
\begin{figure}
				\centering
				\includegraphics[width=.3\textwidth]{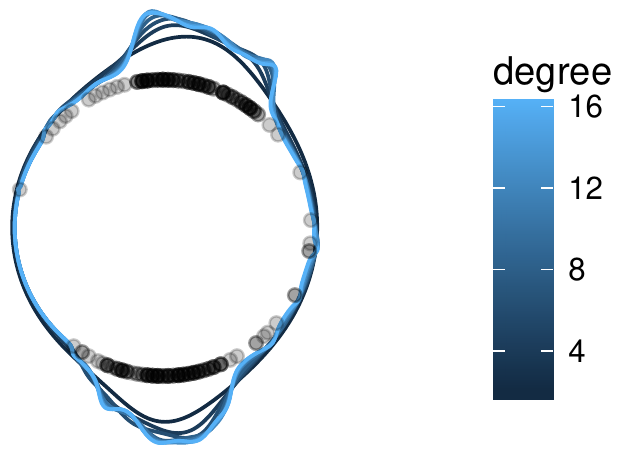}\qquad
				\includegraphics[width=.4\textwidth]{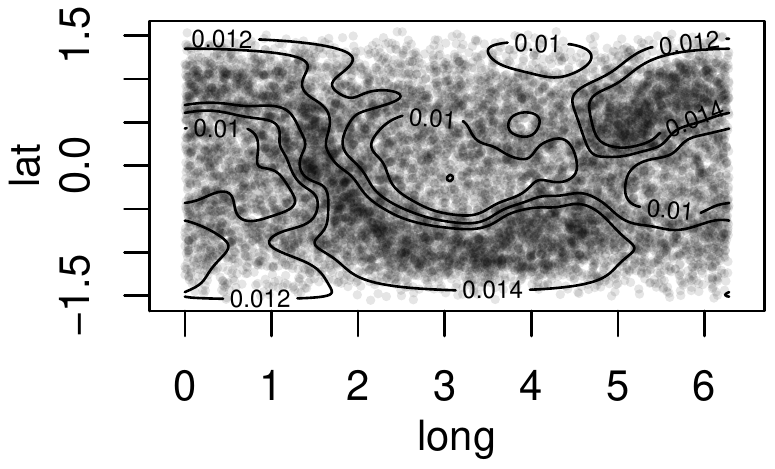}
				\caption{Left: Dragon fly orientation with respect to the sun, on the torus. The curves represent the empirical Christoffel function and the dots are observations.
				Right: Each point represents the observation of a double star on the celestial sphere, associated to longitude and latitude. The level sets represent the empirical Christoffel functions on the sphere in $\RR^3$ (degree $8$). The highlighted band corresponds to the Milky Way.}\label{fig:dragonflyStar}
\end{figure}
We reproduce the experiment performed in \cite{cuevas2006plugin}. The dataset is provided by the European Space agency and was aquired by Hipparcos satelite \cite{european1997hipparcos}. The data consists of the position of 12176 double stars on the celestial sphere described by spherical coordinates. Double stars are of interest in astronomy because of their connection with the formation of evolution of single stars. A natural question is that of the uniformity of the distribution of these double stars on the celestial map. The dataset and corresponding Christoffel function are displayed in Figure \ref{fig:dragonflyStar} using equirectangular projection. Firstly we note that the displayed level lines nicely capture the geometry of the sphere without distortion at the poles. Secondly the Christoffel function allows to detect a higher density region (above 0.14) which corresponds to the Milky Way.

\subsection{Amino-acid dihedral angles: the bi-torus}
We reproduce the manipulations performed in \cite{lovell2003structure}. Proteins are amino acid chains which 3D structure can be described by $\phi$ and $\psi$ backbone dihedral angle of amino acid residues. The 3D structure of a protein is extremely relevant as it relates to the molecular and biological function of a protein. Ramachandran plots consist of a scatter plot of these angles for different amino acids and allow to visualize energetically allowed configuration for each amino acid. 

It is worth emphasizing that being able to describe typical regions in Ramachandran plots is of great relevance as a tool for {\em protein structure validation \cite{lovell2003structure}}. Since the data consist of angles, it has a bi-periodic structure and therefore naturally maps to the bi-Torus in $\RR^4$. A Ramachandran plot for 7705 Glycine amino acids as well as the corresponding Christoffel function estimate is displayed in Figure \ref{fig:dihedral}. The Christoffel function is able to identify highly populated areas (density above 0.08) and its level set nicely fit the specific geometry of the torus. We refer the reader to \cite{lovell2003structure} for more details about this dataset.

\begin{figure}
				\centering
				\includegraphics[width=.6\textwidth]{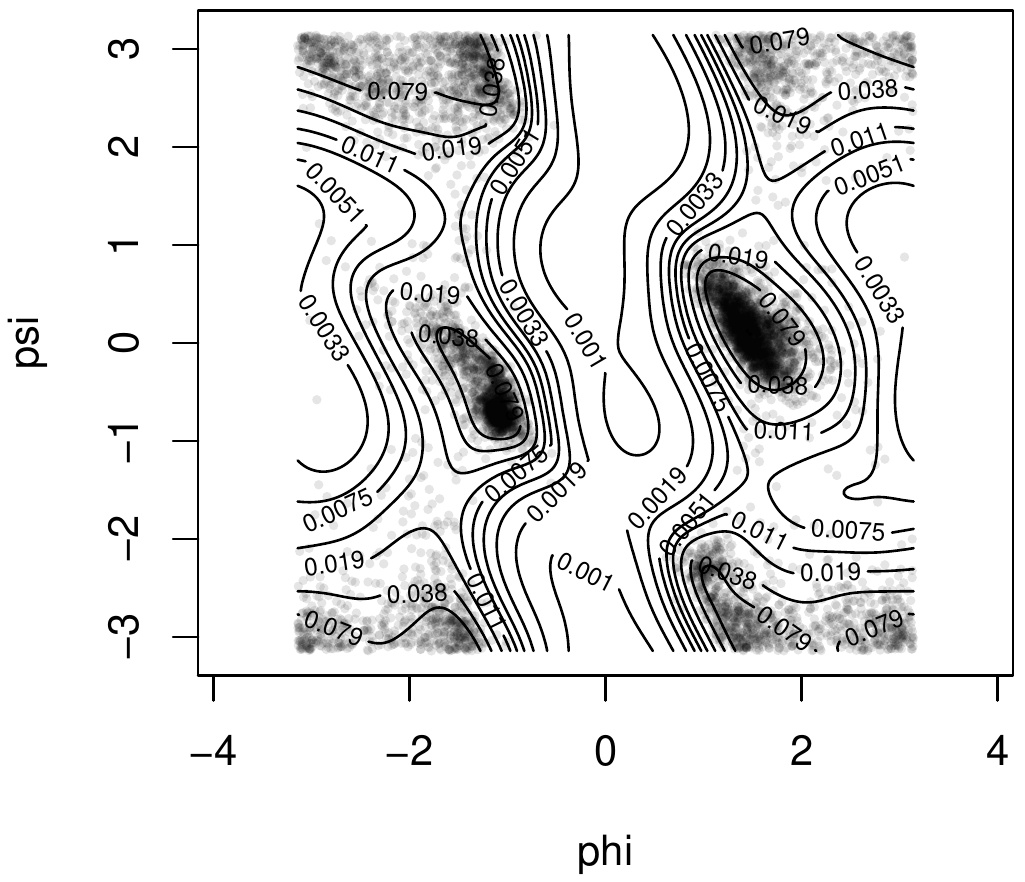}
				\caption{Each point represents two dihedral angles for a Glycine amino acid. These angles are used to describe the global three dimensional shape of a protein. They live on the bitorus. The level sets are those of the empirical Christoffel functions evaluated on the sphere in $\RR^4$. The degree is $4$.}
				\label{fig:dihedral}
\end{figure}

\section{Discussion: perturbation of the moment matrix}
\label{sec:discussion}
We import real algebraic geometry and approximation theory results as tools to infer qualitative, topological properties from datasets. This is intrinsically connected to the notion of algebraic sets and allows to leverage more rigidity than provided by usual smoothness assumptions. This is suited for data restricted to algebraic sets.

However, in a singular situation, we are faced with a numerical instability issue. Aside from exact real arithmetic computation, the notion of rank in finite precision arithmetic is actually ill-defined which makes the singularity assumption questionable. We believe that the objects considered in this paper have sufficient stability properties to treat ``near-singular'' cases.  We describe an heuristic argument, while more refined quantitative results remain to be investigated in a future work.
\begin{lemma}
				For a given $m \in \NN$, let  $\mathbf{A},\MM \in \RR^{m \times m}$ be symmetric semidefinite. Then, for any $\v \in \RR^m$ (uniformly on any compact)
				\begin{align*}
								&\lim_{l \to 0_+} \inf_{\x \in \RR^m} \left\{ \x^T (\MM + l \mathbf{A}) \x \quad {\rm s.t.}\, \x \in \RR^p,\,  \x^T\v = 1    \right\} \\
								=\quad & \inf \left\{\x^T \MM  \x \quad {\rm s.t.}\, \x \in \RR^m,\,  \x^T\v = 1    \right\}.
				\end{align*}
				\label{lem:perturb}
\end{lemma}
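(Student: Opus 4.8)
The plan is to study the value function $h_l(\v):=\inf\{\x^T(\MM+l\mathbf{A})\x:\x\in\RR^m,\ \x^T\v=1\}$ for $l\ge 0$ and to trap $h_l(\v)$ between $h_0(\v)$ and $h_0(\v)+O(l)$. If $\v=0$ the feasible set is empty and both sides of the claimed identity equal $+\infty$ by convention, so assume $\v\neq 0$; then $h_0(\v)$ is finite (take $\x=\v/\|\v\|^2$). First I would record the easy lower bound together with monotonicity: since $\mathbf{A}\succeq 0$, for $0\le l'\le l$ one has $\MM\preceq\MM+l'\mathbf{A}\preceq\MM+l\mathbf{A}$, hence $\x^T\MM\x\le\x^T(\MM+l'\mathbf{A})\x\le\x^T(\MM+l\mathbf{A})\x$ for all $\x$; taking the infimum over the $l$-independent affine set $\{\x^T\v=1\}$ gives $h_0(\v)\le h_{l'}(\v)\le h_l(\v)$. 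Thus $l\mapsto h_l(\v)$ is nondecreasing and bounded below by $h_0(\v)$, so $\lim_{l\to 0_+}h_l(\v)$ exists and is $\ge h_0(\v)$.

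For the matching upper bound I would feed an optimal point of the unperturbed problem into the perturbed one. Since $\x\mapsto\x^T\MM\x$ is a nonnegative convex quadratic, $\{\x:\x^T\v=1\}$ is a nonempty affine subspace, and a convex quadratic bounded below on an affine subspace attains its infimum there, there is $\x^\star=\x^\star(\v)$ with $(\x^\star)^T\v=1$ and $(\x^\star)^T\MM\x^\star=h_0(\v)$; explicitly, by Lemma~\ref{lem:pseudoInv} one may take $\x^\star=\MM^\dagger\v/(\v^T\MM^\dagger\v)$ when $\proj_{\ker(\MM)}(\v)=0$, and any $\x^\star\in\ker(\MM)$ with $(\x^\star)^T\v=1$ otherwise. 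As $\x^\star$ is feasible for the $l$-problem,
\begin{align*}
h_0(\v)\ \le\ h_l(\v)\ &\le\ (\x^\star)^T(\MM+l\mathbf{A})\x^\star\ =\ h_0(\v)+l\,(\x^\star)^T\mathbf{A}\x^\star\\
&\le\ h_0(\v)+l\,\|\mathbf{A}\|\,\|\x^\star(\v)\|^2 .
\end{align*}
Letting $l\to0_+$ proves the pointwise statement, with the explicit rate $0\le h_l(\v)-h_0(\v)\le l\,\|\mathbf{A}\|\,\|\x^\star(\v)\|^2$.

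I expect the uniform version to be the only delicate point: on a compact $K$ it reduces to bounding $\sup_{\v\in K}\|\x^\star(\v)\|$, and this fails for a completely arbitrary $K$ because $\v\mapsto h_0(\v)$ is in general discontinuous across $\mathrm{range}(\MM)$ (e.g.\ for $\MM=\mathrm{diag}(1,0)$, $\mathbf{A}=\mathrm{diag}(0,1)$, $\v=(1,\delta)$ one has $h_0(\v)=0$ for $\delta\neq0$ but $h_0((1,0))=1$, so no near-minimizer of bounded norm exists near the jump). On any compact $K\subseteq\mathrm{range}(\MM)\setminus\{0\}$, however, the choice $\x^\star(\v)=\MM^\dagger\v/(\v^T\MM^\dagger\v)$ depends continuously on $\v$ --- the denominator is continuous and strictly positive on $K$, since $\MM^\dagger$ is positive definite on $\mathrm{range}(\MM)$ --- so $B_K:=\sup_{\v\in K}\|\x^\star(\v)\|^2<\infty$ and $0\le h_l(\v)-h_0(\v)\le l\,\|\mathbf{A}\|\,B_K$ holds uniformly on $K$, at rate $O(l)$. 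This is exactly the regime relevant to the Christoffel function: with $\MM=\MM_{\mu,d}$ and $\mathbf{A}=\MM_{\nu,d}$ the moment matrices of $\mu$ and a perturbing measure $\nu$, and $\v=\v_d(\x)$ for $\x$ ranging over a compact subset of the Zariski closure $V$ of $\supp(\mu)$, Lemma~\ref{lem:radical} forces $\v_d(\x)\perp\ker(\MM_{\mu,d})$, i.e.\ $\v_d(\x)\in\mathrm{range}(\MM_{\mu,d})\setminus\{0\}$ (its constant coordinate being $1$), and the set of these vectors is compact; hence $\Lambda_{\mu+l\nu,d}\to\Lambda_{\mu,d}$ uniformly, at rate $O(l)$, on that set.
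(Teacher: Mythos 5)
Your pointwise argument is exactly the paper's (one-line) proof: monotonicity from $\mathbf{A}\succeq 0$ gives $h_l(\v)\geq h_0(\v)$, and plugging an optimal $\x^\star$ of the unperturbed problem into the perturbed one gives $h_l(\v)\leq h_0(\v)+l\,(\x^\star)^T\mathbf{A}\x^\star\to h_0(\v)$. What you add at this stage --- the justification that $\x^\star$ actually exists (a nonnegative convex quadratic on a nonempty affine subspace attains its infimum, or appeal to Lemma~\ref{lem:pseudoInv}) and the explicit $O(l)$ bound $h_l(\v)-h_0(\v)\leq l\,\|\mathbf{A}\|\,\|\x^\star(\v)\|^2$ --- is a small but worthwhile tightening of an argument the paper merely sketches.

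Where you genuinely depart from the paper is the ``uniformly on any compact'' clause, which the paper's proof does not address at all. Your counterexample $\MM=\mathrm{diag}(1,0)$, $\mathbf{A}=\mathrm{diag}(0,1)$, $\v=(1,\delta)$ is correct and shows the lemma is \emph{false} as literally stated: here $h_0(1,\delta)=0$ for $\delta\neq 0$ but $h_0(1,0)=1$, while $h_l(1,\delta)=l/(l+\delta^2)$, so $\delta=\sqrt{l}$ gives $\sup_{|\delta|\leq 1}\bigl(h_l(1,\delta)-h_0(1,\delta)\bigr)\geq 1/2$ for every $l>0$, and convergence cannot be uniform on the compact $\{(1,\delta):|\delta|\leq 1\}$. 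Your repair is also correct: on compacts $K\subseteq\mathrm{range}(\MM)\setminus\{0\}$ the minimizer $\x^\star(\v)=\MM^\dagger\v/(\v^T\MM^\dagger\v)$ is continuous, hence $\sup_{\v\in K}\|\x^\star(\v)\|^2<\infty$ and the $O(l)$ bound is uniform; and by Lemma~\ref{lem:radical} this is exactly the regime relevant to the application, since $\v_d(\x)\in\mathrm{range}(\MM_{\mu,d})\setminus\{0\}$ for $\x\in V$. In short, you have found a genuine gap in the paper's \emph{statement} (the parenthetical uniformity claim), not only in its terse proof, and supplied the correct hypothesis under which it holds.
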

By Lemma \ref{lem:perturb}, the Christoffel function associated with 
a slight perturbation of the moment matrix is very close to the actual Christoffel function. This justifies the variational formulation \eqref{def-christo-d} as it can be seen as the limit of perturbations of $\mu$ making it non singular. By continuity of eigenvalues an appropriate thresholding scheme should lead to a correct evaluation of the rank of the moment matrix.

An important application of Lemma \ref{lem:perturb} is the addition of noise. Consider the following random process $\y = \x + \epsilon$,
where $\x$ is distributed according to $\mu$ and $\epsilon$ is independent small noise. Measuring the impact of $\epsilon$ on the moments of $\y$ compared to moments of $\x$ will help using our tools in the ``close to singular'' case. Understanding of the singular situation is a key to investigate robust variants suited to more general and practical \emph{manifold learning} situations.

An important example of application of Lemma \ref{lem:perturb} is the case of a convolution, which corresponds to the addition of noise. Consider the following random process
For illustration purpose,
in Figure \ref{fig:convolution} one observes how the level sets tend to concentrate
as the perturbation gets smaller.
\begin{figure}
				\centering
				\includegraphics[width=.6\textwidth]{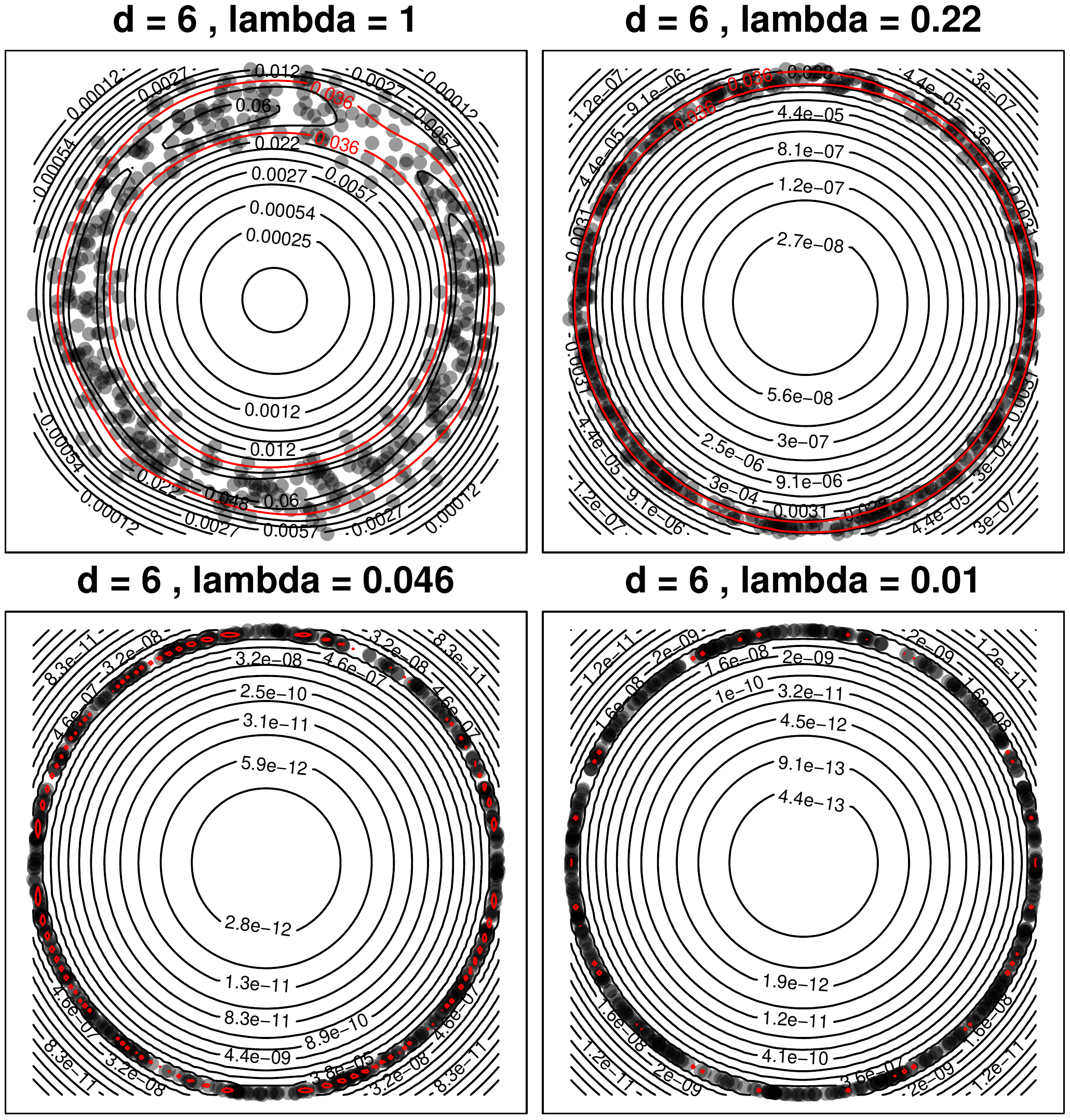}
				\caption{Small perturbation of a set of points on the circle by addition of a small amount of noise. The perturbation tends to 0 and the level sets of the Christoffel function tend to converge to the circle as predicted by Lemma \ref{lem:perturb}.}
				\label{fig:convolution}
\end{figure}

\section{Proofs}
\label{sec:proofs}
This section contains proofs of the main results of the present paper.

\subsection{Rank of the moment matrix}
We begin with the proof of Lemma \ref{lem:radical}.
\begin{proof}[Proof of Lemma \ref{lem:radical}:]
First, $P \in \mathcal{I}$ entails that $P$ vanishes on $\supp(\mu)$ by continuity of $P$. If $P$ did not vanish on $V$, then one would construct a strictly smaller algebraic subset of $V$ containing $\supp(\mu)$ which is contradictory. Conversely, let $P\in\R_d[X]$  vanish on $V$, then $P$ vanishes on $\supp(\mu)$ and $P(\x)^2=0$ for all $\x\in\supp(\mu)$ and therefore $\int P^2d\mu=0$, which in turn implies $P\in\mathcal{I}$.
\end{proof}

We introduce the following notion of genericity, borrowed from \cite{laurent2012approach}. For any $d \in \NN$, let $\mathcal{I}_d:=\mathcal{I}\cap\R_d[x]$ be the intersection of the ideal $\mathcal{I}$ with the vector space of polynomials of degree at most $d$.
\begin{defn}
				For any fixed $d \in \NN$, let $s(d) = {p+d \choose d}$, denote by $\mathcal{K}_d$ the set
				\begin{align*}
								\mathcal{K}_d = \left\{ \MM \in \RR^{s(d) \times s(d)},\, \MM_{11} = 1,\, \MM \succeq 0,\, \p^T\MM\p = 0,\, \forall \p \in \mathcal{I}_d\right\}.
				\end{align*}
				A matrix $\mathbf{T} \in \RR^{s(t) \times s(t)}$ is called generic if $\mathbf{T} \in \mathcal{K}_d$ and $\rank(\mathbf{T}) \geq \rank(\MM)$ for all $\MM \in \mathcal{K}_d$.
				\label{def:genericity}
\end{defn}

\begin{lemma}
				For any	$d \in \NN^*$, $\MM_{\mu,d}$ is generic.
				\label{lem:genricity}
\end{lemma}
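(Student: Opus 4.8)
The goal is to show that $\MM_{\mu,d}$, normalized so that its $(1,1)$ entry equals $1$ (which holds since $\mu$ is a probability measure and the first monomial is the constant $1$), is a rank-maximizing element of the convex set $\mathcal{K}_d$. First I would check that $\MM_{\mu,d} \in \mathcal{K}_d$: positive semidefiniteness was noted in Section \ref{sec:nonSingular}, the $(1,1)$ entry is $\int 1 \, d\mu = 1$, and for $\p \in \mathcal{I}_d$ we have $\p^T \MM_{\mu,d} \p = \int P^2 \, d\mu = 0$ by the definition of $\mathcal{I}$. So the real content is the rank inequality $\rank(\MM_{\mu,d}) \geq \rank(\MM)$ for every $\MM \in \mathcal{K}_d$.

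The key observation is that $\rank(\MM_{\mu,d}) = \HF(d) = \dim\bigl(\RR_d[X]/\mathcal{I}_d\bigr)$. One direction is immediate: since $\mathcal{I}_d \subseteq \ker(\MM_{\mu,d})$ (as $\p^T \MM_{\mu,d}\p = 0$ together with positive semidefiniteness forces $\MM_{\mu,d}\p = 0$), the rank is at most $s(d) - \dim \mathcal{I}_d = \HF(d)$; and the RKHS/quotient discussion of Section \ref{sec:singular} shows $\langle\cdot,\cdot\rangle_\mu$ is positive definite on $L_d^2(\mu) = \RR_d[X]/\mathcal{I}_d$, so the rank is exactly $\HF(d)$. (Alternatively this is just Proposition \ref{prop:idealRank}, which I may cite directly, though it is cleaner to prove it inline to avoid circularity.) Now for an arbitrary $\MM \in \mathcal{K}_d$: the condition $\p^T \MM \p = 0$ for all $\p \in \mathcal{I}_d$, combined with $\MM \succeq 0$, gives $\mathcal{I}_d \subseteq \ker(\MM)$ by the same Cauchy–Schwarz/semidefinite argument (if $\MM \succeq 0$ and $\p^T\MM\p = 0$ then $\MM\p = 0$, since $0 \le (\p + t\q)^T\MM(\p+t\q) = 2t\,\q^T\MM\p + t^2\q^T\MM\q$ for all $t$ forces $\q^T\MM\p = 0$). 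Hence $\rank(\MM) \leq s(d) - \dim\mathcal{I}_d = \HF(d) = \rank(\MM_{\mu,d})$, which is exactly what genericity requires.

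I expect the only subtlety to be making sure the chain of equalities $\rank(\MM_{\mu,d}) = s(d) - \dim\mathcal{I}_d$ is genuinely established and not merely asserted — i.e. that $\ker(\MM_{\mu,d})$ is \emph{exactly} $\mathcal{I}_d$ and not strictly larger. This is where positive definiteness of $\langle\cdot,\cdot\rangle_\mu$ on the quotient $L_d^2(\mu)$ does the work: if $\MM_{\mu,d}\p = 0$ then $\int P^2 d\mu = \p^T\MM_{\mu,d}\p = 0$, so $P \in \mathcal{I}$ and thus $\p \in \mathcal{I}_d$. With $\ker(\MM_{\mu,d}) = \mathcal{I}_d$ pinned down, the argument is complete: $\MM_{\mu,d}$ attains the maximal possible rank $\HF(d)$ over $\mathcal{K}_d$ because \emph{every} member of $\mathcal{K}_d$ has its kernel containing $\mathcal{I}_d$, so none can exceed that rank, while $\MM_{\mu,d}$ meets it.
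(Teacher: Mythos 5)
Your proof is correct and follows essentially the same route as the paper's: you show $\MM_{\mu,d}\in\mathcal{K}_d$, then establish that $\ker(\MM_{\mu,d})$ is contained in $\ker(\MM)$ for every $\MM\in\mathcal{K}_d$ via the same positive-semidefinite kernel argument (you phrase it through the equality $\ker(\MM_{\mu,d})=\mathcal{I}_d$ and a dimension count, while the paper shows the kernel inclusion directly, but the core mechanism is identical). The extra discussion of $\HF(d)$ and the explicit Cauchy--Schwarz-style justification of "$\MM\succeq 0$ and $\p^T\MM\p=0$ implies $\MM\p=0$" are just added detail, not a different idea.
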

\begin{proof}
				First, for any polynomial $P$ of degree at most $d$, with coefficient vector $\p$, we have $\p^T \MM_{\mu,d} \p = \int P^2 d\mu$. This quantity vanishes for $P \in \mathcal{I}_d$ and hence $\MM_{\mu,d} \in \mathcal{K}_d$. Choose $\MM \in \mathcal{K}_{d}$ and $\p \in \RR^{s(d)}$ such that $\MM_{\mu,d}\p = 0$. We have that $\p^T \MM_{\mu,d} \p = \int P^2 d\mu = 0$ so that  $P \in \mathcal{I}_d$ and hence $\MM\p = 0$. We have shown that $\ker(\MM_{\mu,d}) \subset \ker(\MM)$ so that $\rank(\MM_{\mu,d}) \geq \rank(\MM)$.
\end{proof}
We introduce the Hilbert function and the Hilbert polynomial. We refer the reader to \cite[Chapter 9]{cox2007ideals} for a general presentation. 
\begin{defn}
				The Hilbert Function of $\mathcal{I}$ is defined by
				\begin{align*}
								\HF \colon \NN &\mapsto \NN \\
								t &\mapsto \dim \RR_t[x] / \mathcal{I}_t. 
				\end{align*}
				\label{def:hilbertPoly}
\end{defn}
The following lemma defines the Hilbert polynomial.
\begin{lemma}
				For large values of $t$, $\HF$ is a polynomial. 
				\label{lem:hilbertPoly}
\end{lemma}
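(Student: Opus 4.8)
The plan is to reduce the statement to a counting problem for a monomial ideal, following \cite[Chapter 9]{cox2007ideals}. Fix a degree-compatible (graded) monomial order $\succ$ on $\RR[X]$, compute a Gr\"obner basis $G$ of $\mathcal{I}$, and let $\langle\mathrm{LT}(\mathcal{I})\rangle$ be the associated leading-term ideal. Since $\succ$ is graded, the division algorithm modulo $G$ never raises the degree of a polynomial, so every class of $\RR_t[X]/\mathcal{I}_t$ is represented by a linear combination of the ``standard'' monomials of degree $\le t$, i.e.\ the $X^\alpha$ with $|\alpha|\le t$ and $X^\alpha\notin\langle\mathrm{LT}(\mathcal{I})\rangle$; and these are linearly independent modulo $\mathcal{I}$, since a nonzero combination of them would have leading term in $\langle\mathrm{LT}(\mathcal{I})\rangle$. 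Hence for every $t$,
\[
  \HF(t)=\#\{\,X^\alpha :\ |\alpha|\le t,\ X^\alpha\notin\langle\mathrm{LT}(\mathcal{I})\rangle\,\}.
\]
This identity is the only substantive step; checking it carefully is where the degree filtration and the non-homogeneity of $\mathcal{I}$ genuinely enter (they force the use of a \emph{graded} order), and it is the part I would expect to be the main obstacle to a clean write-up. Everything after it is bookkeeping.

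Next I would show that the right-hand count is eventually polynomial in $t$. By Dickson's lemma $\langle\mathrm{LT}(\mathcal{I})\rangle$ is generated by finitely many monomials $m_1,\dots,m_r$. The monomials of degree $\le t$ divisible by a fixed monomial $m$ are in bijection, via division by $m$, with all monomials of degree $\le t-\deg m$, of which there are $\binom{t-\deg m+p}{p}$ when $t\ge\deg m$ and none otherwise. Inclusion--exclusion over the least common multiples $m_S=\mathrm{lcm}(m_i:i\in S)$ of nonempty $S\subseteq\{1,\dots,r\}$ then counts the degree-$\le t$ monomials lying in $\langle\mathrm{LT}(\mathcal{I})\rangle$ as $\sum_{\emptyset\ne S}(-1)^{|S|+1}\binom{t-\deg m_S+p}{p}$, and subtracting this from the total $s(t)=\binom{t+p}{p}$ exhibits $\HF(t)$ as a fixed integer linear combination of binomial polynomials $\binom{t-a+p}{p}$, $a\ge0$. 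This is a genuine polynomial in $t$ once $t\ge\max_S\deg m_S$, which is precisely the claim; that threshold is the only place where a hypothesis ``$t$ large'' is genuinely used, below it the Hilbert function and its polynomial may disagree. The finer facts that the resulting polynomial has degree $\dim V$ and leading coefficient recording $\deg V$ follow by combining this with \cite[Proposition 2]{laurent2012approach} and the usual dimension interpretation of the Hilbert polynomial, as invoked in Proposition \ref{prop:idealRank}.

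A conceptually cleaner variant avoids Gr\"obner bases: homogenize $\mathcal{I}$ with respect to a fresh variable $x_0$ to obtain a homogeneous ideal $\mathcal{I}^h\subseteq\RR[x_0,X]$. Dehomogenization $g(x_0,X)\mapsto g(1,X)$ induces, for every $t$, a linear isomorphism between the graded piece $(\RR[x_0,X]/\mathcal{I}^h)_t$ and the affine quotient $\RR_t[X]/\mathcal{I}_t$: it is onto because $P\in\RR_t[X]$ is the dehomogenization of $x_0^t\,P(X/x_0)$, and one-to-one because a degree-$t$ form is determined by its dehomogenization together with its degree, while $\mathcal{I}^h(1,X)=\mathcal{I}$. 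Hence $\HF(t)=\dim_{\RR}M_t$ for the finitely generated graded module $M=\RR[x_0,X]/\mathcal{I}^h$ over a polynomial ring in $p+1$ variables, and the Hilbert--Serre theorem --- its Hilbert series equals $Q(\lambda)/(1-\lambda)^{p+1}$ for some $Q\in\mathbb{Z}[\lambda]$ --- gives the same conclusion upon expanding $(1-\lambda)^{-(p+1)}=\sum_{t}\binom{t+p}{p}\lambda^t$ and reading off the coefficient of $\lambda^t$.
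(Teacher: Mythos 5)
The paper does not include its own proof of this lemma; it records it as classical and points to \cite[Chapter 9]{cox2007ideals}. Your first argument (graded monomial order, identification of $\HF(t)$ with the count of standard monomials of degree $\leq t$, Dickson's lemma, inclusion--exclusion over the monomial generators of $\langle\mathrm{LT}(\mathcal{I})\rangle$) is exactly the argument of that reference, and your remark that the graded order is what makes the affine degree filtration compatible with the quotient is the right subtlety to highlight. Both steps of the inclusion--exclusion count check out, and the threshold $t\geq\max_S\deg m_S$ you give is precisely where the binomial formulas become exact.

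Your second route, via homogenization and Hilbert--Serre, is also correct and genuinely different in flavour: it trades the explicit monomial count for the rationality of the Hilbert series $Q(\lambda)/(1-\lambda)^{p+1}$ of the graded module $\RR[x_0,X]/\mathcal{I}^h$. The one point worth making explicit is that $\mathcal{I}^h$ must be the ideal generated by the homogenizations of \emph{all} elements of $\mathcal{I}$, not merely of a chosen finite generating set --- the latter can be strictly smaller --- since that is what guarantees $\mathcal{I}^h(1,X)=\mathcal{I}$ and hence the vector-space isomorphism $(\RR[x_0,X]/\mathcal{I}^h)_t\cong\RR_t[X]/\mathcal{I}_t$. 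With that caveat stated, either version is a complete proof, and the first one is the proof in the source the paper cites.
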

Intuitively, the Hilbert function associates to $d$ the dimension of the space of polynomials of degree up to $d$ on $\supp(\mu)$. This space is constructed using a natural quotient map as described in Section \ref{sec:singular}. 
Proposition \ref{prop:idealRank} combines \cite[Proposition 2]{laurent2012approach} and relation between Hilbert's polynomial and dimension \cite[Chapter 9]{cox2007ideals}. 

\subsection{Finite sample approximation, proof of Theorem \ref{th:finiteSample}}
\label{sec:finiteSampleApproxProof}
Recall that for any $d\in \NN^*$, $L_d^2(\mu)$ is identified with $\RR_d[x] / \mathcal{I}_d$, two polynomials are considered equivalent if they agree on $V$. $(L_d^2(\mu), \langle\cdot,\cdot\rangle_\mu)$ is a finite dimensional RKHS which reproducing kernel is symmetric positive definite and defined for all $\x \in V$, by $\kappa_{\mu,d}(\cdot,\x) \in L_d^2(\mu)$ such that for all $P \in L_d^2(\mu)$,
\begin{align*}
				\int P(\y) \kappa_{\mu,d}(\y,\x) d\mu(\y) = P(\x).
\end{align*}
For any $\x \in V$, $\kappa_{\mu,d}(\cdot, \x)$ is a polynomial and, by symmetry, $\kappa_{\mu,d}$ is itself a polynomial on $V^2$ which satisfies $\kappa_{\mu,d}(\x,\x) > 0$ for all $\x \in V$. Furthermore, we have $L_d^2(\mu) = \mathrm{span}\left\{ \kappa_{\mu,d}(\cdot,\x), \,\x \in \supp(\mu) \right\}$.
This space being finite dimensional, there exists $N(d) \leq s(d)$ such that one can find a basis $\e_1,\ldots, \e_{N(d)} \in \supp(\mu)$ with 
\begin{align}
				L_d^2(\mu) = \mathrm{span}\left\{ \kappa_{\mu,d}(\cdot,\e_i), \,i = 1,\ldots, N(d) \right\}.
				\label{eq:basis}
\end{align}
Furthermore, let $\left\{ P_i \right\}_{i=1}^{N(d)}$ be an orthonormal basis of $L_d^2(\mu)$. Then the identity $\int \kappa_{\mu,d}(x,y) P_i(y) d\mu(y) = P_i(x)$,
for all $\x \in V$ and all $i$, yields:
\begin{align}
				\kappa_{\mu,d}(\x,\y) = \sum_{i=1}^{N(d)} P_i(\x) P_i(\y), \qquad \forall\,\x,\y \in V.
				\label{eq:christoDarbouxKernel}
\end{align}
We begin by a technical Lemma which is a consequence of the reproducing kernel construction and its relation to orthogonal polynomials.
\begin{lemma}
				Let $\v_1,\ldots, \v_n\,\in V$ and let  $\mu_n$ be the empirical average 
				$\sum_{i=1}^n \delta_{\v_i}$. Then:
				\begin{align*}
								\rank\left( \left( \kappa_{\mu,d}(\v_i,\v_j) \right)_{i,j = 1\ldots n} \right) = \rank\left( \MM_{\mu_n,d} \right).
				\end{align*}
				\label{lem:rankEquality}
\end{lemma}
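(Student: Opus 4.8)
The plan is to express both ranks in terms of one auxiliary quantity, the dimension of the space of evaluation vectors at the sample points,
\[
E \;:=\; \bigl\{\,(P(\v_1),\dots,P(\v_n)) \;:\; P \in \RR_d[X]\,\bigr\} \;\subseteq\; \RR^n ,
\]
and then to show that each of $\rank(\MM_{\mu_n,d})$ and $\rank\big((\kappa_{\mu,d}(\v_i,\v_j))_{i,j}\big)$ equals $\dim E$.

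First I would treat the left-hand side. Up to the irrelevant scalar factor coming from the normalisation of $\mu_n$, one has $\MM_{\mu_n,d}=\Phi^T\Phi$, where $\Phi\in\RR^{n\times s(d)}$ is the matrix whose $i$-th row is $\v_d(\v_i)^T$. Since $\rank(\Phi^T\Phi)=\rank(\Phi)$, it suffices to identify the range of $\Phi$ acting on coefficient vectors $\p\in\RR^{s(d)}$: its $i$-th coordinate is $\langle\p,\v_d(\v_i)\rangle=P(\v_i)$ for the polynomial $P$ with coefficient vector $\p$. Hence $\mathrm{range}(\Phi)=E$ and $\rank(\MM_{\mu_n,d})=\rank(\Phi)=\dim E$.

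Next I would treat the right-hand side via the Christoffel--Darboux representation \eqref{eq:christoDarbouxKernel}. Because every $\v_i$ lies in $V$, that identity applies, and with $\{P_k\}_{k=1}^{N(d)}$ the orthonormal basis of $L_d^2(\mu)$ it gives
\[
\bigl(\kappa_{\mu,d}(\v_i,\v_j)\bigr)_{i,j} \;=\; \Psi\Psi^T,\qquad \Psi\in\RR^{n\times N(d)},\quad \Psi_{ik}=P_k(\v_i).
\]
Thus $\rank\big((\kappa_{\mu,d}(\v_i,\v_j))_{i,j}\big)=\rank(\Psi)=\dim(\mathrm{col}\,\Psi)$, and the column space of $\Psi$ is spanned by the vectors $(P_k(\v_1),\dots,P_k(\v_n))$, $k=1,\dots,N(d)$. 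Since $\{P_k\}$ is a basis of $L_d^2(\mu)$, that span equals $\{(Q(\v_1),\dots,Q(\v_n)) : Q\in L_d^2(\mu)\}$; and because every polynomial in $\mathcal{I}_d$ vanishes on $V$ (Lemma \ref{lem:radical}) while all $\v_i\in V$, the evaluation vector $(Q(\v_1),\dots,Q(\v_n))$ is well defined on the quotient $L_d^2(\mu)=\RR_d[X]/\mathcal{I}_d$ and this last set is exactly $E$. Therefore $\rank\big((\kappa_{\mu,d}(\v_i,\v_j))_{i,j}\big)=\dim E=\rank(\MM_{\mu_n,d})$, which is the claim.

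There is no genuine obstacle here: the argument is elementary linear algebra once \eqref{eq:christoDarbouxKernel} is available. The only point requiring a little care is the well-definedness of polynomial evaluation on $L_d^2(\mu)$ at points of $V$, and the ensuing identification of the two presentations of $E$ (over all of $\RR_d[X]$ versus over $L_d^2(\mu)$); both rest on $\v_i\in V$ together with Lemma \ref{lem:radical}.
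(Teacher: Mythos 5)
Your argument is correct and follows the same essential route as the paper's proof: both hinge on the Christoffel--Darboux representation \eqref{eq:christoDarbouxKernel} to write the kernel matrix as a Gram matrix of evaluations of an orthonormal basis $\{P_k\}$ of $L_d^2(\mu)$, write $\MM_{\mu_n,d}$ as a Gram matrix of evaluations of a basis of $\RR_d[X]$, and then invoke the fact that polynomials in $\mathcal{I}_d$ vanish at the $\v_i\in V$ to identify the two rank computations. The only difference is presentational: you introduce the intermediate evaluation space $E$ and show that each rank equals $\dim E$, whereas the paper extends lifts $\{Q_i\}_{i\le N(d)}$ of the orthonormal basis to a full basis of $\RR_d[X]$ whose remaining members vanish on $V$, producing a single $n\times s(d)$ matrix $D$ for which the kernel matrix is $DD^T$ and (after a change of basis leaving the rank invariant) the moment matrix has the rank of $D^TD$, so that the claim reduces to $\rank(DD^T)=\rank(D^TD)$. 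Your version arguably makes the role of the vanishing ideal a bit more transparent; the paper's is slightly more compact. Both are sound.
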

\begin{proof}
				This is a consequence of equation (\ref{eq:christoDarbouxKernel}). Let $\left\{ P_i \right\}_{i=1}^{N(d)}$ be an orthonormal basis of $L_d^2(\mu)$. For each $i = 1 ,\ldots, N(d)$ we choose $Q_i \in \RR_d[X]$ to be one element in the equivalence class of $P_i$. The family $\left\{Q_i\right\}_{i=1}^{N(d)}$ must be independent in $\RR_d[X]$ otherwise this would contradict independence of $\left\{ P_i \right\}_{i=1}^{N(d)}$ in $L_d^2(\mu)$. This independent family can be extended to a basis of $\RR_d[X]$ which we denote by $\left\{Q_i\right\}_{i=1}^{s(d)}$. For each $j = N(d) + 1, \ldots, s(d)$ we may enforce that $Q_j$ vanishes $V$ by substracting if necessary a linear combination of $\left\{Q_i\right\}_{i=1}^{N(d)}$ agreeing with $Q_j$ on $V$. Consider the matrix $D = \left(Q_j(\v_i)  \right)_{i = 1\ldots n,\,j = 1\ldots s(d)}$. Then $\rank\left( \left( K(\v_i,\v_j) \right)_{i,j = 1\ldots n} \right) = \rank(D D^T)$ from \eqref{eq:christoDarbouxKernel} and $ \rank\left( \MM_{\mu_n,d} \right) = \rank(D^T D)$ since the rank of the moment matrix does not depend on the choice of polynomial basis in $\RR_d[X]$. Both ranks are the same which is the desired result.  
\end{proof}
Now let $V^{N(d)}$ be the cartesian product space 
$\underbrace{V\times\cdots \times V}_{\mbox{$N(d)$ times}}$. It is irreducible \cite[Theorem 2.8.3]{bochnak1998real} and its area measure
is the product measure $\sigma = \otimes_{i=1}^{N(d)} \sigma_V$. The determinantal function:
\begin{align*}
				F \colon V^{N(d)} &\mapsto \RR\\
				(\x_1, \ldots, \x_n) &\mapsto \det\left( \left( K(\x_i,\x_j) \right)_{i,j = 1, \ldots, N(d)} \right)
\end{align*}
is a polynomial function which is not identically zero on $V^{N(d)}$, since by (\ref{eq:basis}), $F(\e_1, \ldots, \e_{N(d)}) > 0$. The product structure of $\sigma$ reflects independence of the random sample with common distribution $\sigma_V$ in Assumption \ref{ass:measReg}. The polynomial $F$ is to be integrated with respect to $\sigma$. Using the observation that $F$ is nonzero implies that the empirical moment matrix has full rank, we will actually show that $F$ is non zero $\sigma$ almost everywhere and deduce absence of rank deficiency with probability one.
The following observation, of some independent interest, is in order.
\begin{lemma}
				Let $W$ be an irreducible real algebraic set with corresponding area measure 
				$\sigma_W$ and $P$ be a polynomial on $W$. The following are equivalent
				\begin{itemize}
								\item[(i)] $\sigma_W\left( \left\{ \z \in W:\: P(z) = 0 \right\} \right) > 0$.
								\item[(ii)] $P$ vanishes on $W$.
				\end{itemize}
				\label{lem:irreducibleVanish}
\end{lemma}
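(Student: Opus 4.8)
The plan is to prove the two implications separately. The implication (ii) $\Rightarrow$ (i) is immediate: if $P$ vanishes identically on $W$ then $\{z \in W : P(z) = 0\} = W$, so it suffices to note that $\sigma_W(W) > 0$. This holds because, by the construction recalled just before Assumption \ref{ass:measReg}, $\sigma_W$ restricts to the $p_W$-dimensional Riemannian density on the nonempty smooth manifold $Z = W \setminus Y$, which assigns positive mass to any nonempty relatively open subset of $Z$; hence $\sigma_W(W) \geq \sigma_W(Z) > 0$. In the degenerate case $p_W = 0$ one has $Y = \emptyset$, $W$ is a nonempty finite set and $\sigma_W$ is counting measure, so the statement is trivial; I would dispatch this case first and assume $p_W \geq 1$ afterwards.

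For (i) $\Rightarrow$ (ii) I would argue by contraposition: assuming $P$ does not vanish on $W$, I show $\sigma_W(\{z \in W : P(z) = 0\}) = 0$. The set $W' := W \cap \{P = 0\}$ is an algebraic subset of $W$, and it is \emph{proper} since some point of $W$ is not a zero of $P$. Because $W$ is irreducible, every proper algebraic subset of $W$ has dimension strictly smaller than $\dim W = p_W$ (see \cite[Chapter 2]{bochnak1998real}), so $\dim W' \leq p_W - 1$. Now split $W' = (W' \cap Y) \cup (W' \cap Z)$. The first piece lies in $Y$, which is $\sigma_W$-null by the very definition of $\sigma_W$ as a Borel measure carried by $Z = W \setminus Y$. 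The second piece is a semialgebraic subset of $Z$ of dimension at most $p_W - 1$; by the stratification (or cell decomposition) theorems for semialgebraic sets \cite[Chapter 9]{bochnak1998real} it is a finite union of $C^1$ submanifolds of $\RR^p$ each of dimension $< p_W$, hence of $p_W$-dimensional Hausdorff measure zero. Since $\sigma_W$ coincides with $p_W$-dimensional Hausdorff measure on $Z$, this piece is also $\sigma_W$-null, and therefore $\sigma_W(W') = 0$, which is the contrapositive of (i) $\Rightarrow$ (ii).

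The only nontrivial ingredients are the two facts I would quote rather than reprove: that irreducibility of $W$ forces every proper algebraic subset to drop dimension, and that the area measure, being locally the $p_W$-dimensional Hausdorff measure on the smooth locus, vanishes on any semialgebraic set of dimension $< p_W$ via its decomposition into lower-dimensional manifolds. I expect the bookkeeping around the splitting $W' = (W'\cap Y)\cup(W'\cap Z)$ and the treatment of $Y$ itself as a $\sigma_W$-null set to be entirely routine, so the crux of the argument is really this single measure-theoretic estimate on lower-dimensional algebraic subsets.
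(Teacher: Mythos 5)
Your proof is correct, but it takes a genuinely different and noticeably more elementary route than the paper. The paper handles the implication (i) $\Rightarrow$ (ii) by first extracting from the positive-measure hypothesis a regular point $\x_0$ and a Euclidean ball on which $P$ vanishes, and then transfers this local information to global vanishing via complexification: it invokes R\"uckert's complex analytic Nullstellensatz \cite[Proposition 1.1.29]{huybrechts2006complex} on the germ at $\x_0$, then uses flatness/completion results from \cite{serre1956geometrie} to replace analytic coefficients by rational ones, and finally concludes $P \in I(W)$ from primeness of $I(W)$. You instead argue by contraposition entirely inside real semialgebraic geometry: if $P$ does not vanish on $W$, then $W \cap \{P=0\}$ is a \emph{proper} algebraic subset of the irreducible set $W$, so its dimension is $\leq p_W - 1$; splitting off the singular locus $Y$ (which is $\sigma_W$-null by construction) and using that a semialgebraic set of dimension $< p_W$ is a finite union of lower-dimensional cells and hence has zero $p_W$-dimensional Hausdorff measure, you conclude $\sigma_W(W \cap \{P=0\}) = 0$. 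Both facts you cite (dimension drop for proper algebraic subsets of an irreducible real algebraic set, and the cell-decomposition/Hausdorff-measure estimate) are standard in \cite{bochnak1998real}, and the identification of $\sigma_W$ with $p_W$-dimensional Hausdorff measure on the smooth locus $Z$ is indeed how the area measure is built in the paper via \cite{duistermaat2004multi}. What the paper's complex-analytic route buys is the stronger algebraic conclusion that vanishing on a single Euclidean neighbourhood already forces $P \in I(W)$ — the Zariski-density statement flagged in the discussion after Theorem \ref{th:finiteSample} — which is of independent interest; what your route buys is a shorter, self-contained argument that never leaves real algebraic geometry and avoids the passage through local rings and GAGA-type theorems. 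Either proof is adequate for the lemma as stated.
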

This lemma follows from the fact that a euclidean neighborhood of a regular point of $W$ is Zariski dense in $W$, assuming irreducibility. See also the comments after the statement of Theorem \ref{th:finiteSample}. We describe a detailed proof in Appendix \ref{sec:lemmas} for self containedness.

We may now proceed to the proof of the rank stabilization result from Theorem \ref{th:finiteSample}. First, since $V$ is irreducible, $V^{N(d)}$ is also irreducible \cite[Theorem 2.8.3]{bochnak1998real}. Since $F$ is a polynomial which does not vanish everywhere on $V^{N(d)}$, we deduce from Lemma \ref{lem:irreducibleVanish} that 
\begin{align}
				&\sigma\left( \left\{ \x_1,\ldots, \x_{N(d)} \in V^{N(d)}:\: F(\x_1, \ldots, \x_{N(d)}) = 0\right\} \right)\nonumber \\
				= &\int_{V^{N(d)}} \mathbb{I}\left[ F(\x_1, \ldots, \x_{N(d)}) = 0 \right] d \sigma(\x_1,\ldots, \x_{N(d)}) \nonumber\\
				= &\ 0.
				\label{eq:vanishAreaMeas}
\end{align}
Noticing that $\rank(\MM_{\mu,d}) = N(d)$, the result follows because $\mu$ is absolutely continuous with respect to $\sigma_V$ and hence $\otimes_{i=1}^{N(d)} \mu$ is absolutely continuous with respect to $\sigma$ so that, combining Lemma \ref{lem:rankEquality} and (\ref{eq:vanishAreaMeas}) with the i.i.d. assumption, yields:
\begin{align*}
				\PP\left(\rank\left( \MM_{\mu_n,d} < N(d)\right)\right) &= \PP\left(F(x_1,\ldots, x_{N(d)}) = 0\right) \\
				&= \int_{V^{N(d)}} \mathbb{I}\left[ F(x_1, \ldots, x_{N(d)}) = 0 \right] d_\mu(x_1) \ldots d_\mu(x_{N(d)}) = 0.
\end{align*} 
This proves the first point. The second point is a consequence of the first point since the existence of a polynomial vanishing on the sample but not on the whole $V$ would induce a rank deficiency in the empirical moment matrix and this occurs with zero probability.

\subsection{Reference measure with uniform asymptotic behavior, proof of Theorem \ref{th:asymptoticsUnitSphere}}
\label{sec:sphereProof}
Although the proof is mostly adapted from \cite[Theorem 1.1]{kroo2012christoffel}, we provide explicit details and use a simplified set of assumptions which allows us to deal with singular measures.
\begin{proof}[Proof of Theorem \ref{th:asymptoticsUnitSphere}:]
				Our arguments, mostly inspired from \cite{kroo2012christoffel}, are
				adapted to our setting. In addition, we provide some novel quantitative details.
				We split the proof into two parts. The second part is essentially a repetition of arguments similar to those used in the first part.
				\paragraph{One direction}
				By Assumption \ref{ass:densityEstimation},  $Z$ is compact and so denote by $D$ the finite diameter of $Z$. Fix an arbitrary $d \in \NN^*$ and set:
				\begin{align}
								\epsilon_d = \sup_{\x,\y \in Z,\,\|\x- \y\| \leq D / \sqrt{d}} |f(\x) - f(\y)|,
								\label{eq:asymp1}
				\end{align}
				which is well-defined because $f$ is continuous on the compact set $Z$ and hence uniformly continuous on $Z$, which ensures that the supremum is finite \footnote{Remark \ref{rem:explicitSphere} provides a justification of the choice of a decrease of order $1/\sqrt{d}$, this achieves balance between two terms in the case of Lipschitz densities}. Furthermore, $\epsilon_d \to 0$ as $d \to \infty$.

				Next, fix an arbitrary $\x_0 \in Z$ and $\alpha \in (1/2,1)$. We will prove a convergence rate of order $d^{\alpha - 1}$, the flexibility in the choice of $\alpha$ allows to get arbitrarily close to $d^{-1/2}$. Choose $P_{d - \lfloor d^{\alpha}\rfloor}$ realizing the infimum for $\Lambda_{\umeas,d - \lfloor d^{\alpha}\rfloor}(\x_0)$ ($\lfloor\cdot\rfloor$ denotes the floor integer part). Set $Q_{2\lfloor d^{\alpha}/2\rfloor} \colon \y \mapsto Q((\y-\x_0)/D)$ where $Q$ is given by Lemma \ref{lem:needlePoly} with $\delta = 1/\sqrt{d}$ and of degree at most $2\lfloor d^{\alpha}/2\rfloor$. Then:
				\begin{align*}
								2\lfloor d^{\alpha}/2\rfloor \leq \lfloor d^{\alpha}\rfloor,
				\end{align*}
				and hence, the polynomial $P = P_{d - \lfloor d^{\alpha}\rfloor}Q_{2\lfloor d^{\alpha}/2\rfloor}$ is of degree at most $d$ and satisfies 
				\begin{align}
								\begin{cases}
									P(\x_0) = 1\\
									|P| \leq |P_{d - \lfloor d^{\alpha}\rfloor}| \text{ on } Z\\
									P^2 \leq 2^{2 - d^{\alpha - 1/2}/2} \sup_{\x \in Z} P_{d - \lfloor d^{\alpha}\rfloor}(x)^2 \text{ on } Z \setminus B_{D/\sqrt{d}}(\x_0),	
								\end{cases}
								\label{eq:asymp2}
				\end{align}
				where the first identity is because $P$ is a product of polynomial 
				whose value at $\x_0$ is $1$, the last two identities follow by maximizing both terms of the product, using Lemma \ref{lem:needlePoly} and the fact that $1 - \delta \lfloor d^{\alpha}/2\rfloor \leq 2 - \delta d^{\alpha} / 2  =  2 - d^{\alpha - 1 / 2} / 2$. 
				$P$ is feasible and provide an upper bound for the computation of the Christoffel function of $\mu$. We obtain:
		\begin{align*}
						&\Lambda_{\mu,d}(\x_0) \\
						\overset{(i)}{\leq} \quad & \int_{Z} (P(\x))^2 d\mu(\x)\\
						\overset{(ii)}{\leq} \quad & \int_{Z \cap \B(\x_0, D/\sqrt{d})} (P_{d - \lfloor d^{\alpha}\rfloor}(\x))^2f d\umeas(\x) + 2^{2 - d^{\alpha - 1/2}/2} \sup_{\x \in Z} P_{d - \lfloor d^{\alpha}\rfloor}(\x)^2 \int_{Z \setminus \B(\x_0, D/\sqrt{d})} d\mu(\x)\\
						\overset{(iii)}{\leq} \quad & (f(\x_0 ) + \epsilon_d) \int_{Z \cap \B(\x_0, D/\sqrt{d})} (P_{d - \lfloor d^{\alpha}\rfloor}(\x))^2 d\umeas(\x) \\
						&+ 2^{2 - d^{\alpha-1/2}/2} \sup_{\z \in Z} \left(\Lambda_{\umeas,d - \lfloor d^{\alpha}(\z)\rfloor}  \right)^{-1} \int_{Z} (P_{d- \lfloor d^{\alpha}\rfloor}(\x))^2 d\umeas(\x)\\
						\overset{(iv)}{\leq} \quad & \left(f(\x_0 ) + \epsilon_d + 2^{2 - d^{\alpha - 1/2}/2}\sup_{\z \in Z}\left( \Lambda_{\umeas,d - \lfloor d^{\alpha}\rfloor}(\z)  \right)^{-1}\right) \int_{Z} (P_{d - \lfloor d^{\alpha}\rfloor}(\x))^2 d\umeas(\x) \\
						= \quad & \left(f(\x_0 ) + \epsilon_d + 2^{2 - d^{\alpha-1/2}/2}\sup_{\z \in Z}\left( \Lambda_{\umeas,d - \lfloor d^{\alpha}\rfloor}(\z)  \right)^{-1}\right)  \Lambda_{\umeas,d - \lfloor d^{\alpha}\rfloor}(\x_0).
		\end{align*}
		where $(i)$ is because $P(\x_0) = 1$, $(ii)$ follows by decomposition of the integral over two domains and the uniform bounds in (\ref{eq:asymp2}), $(iii)$ follows by combining (\ref{eq:asymp1}), Lemma \ref{lem:firstProperties} and the fact that $\mu$ is a probability measure on $Z$, $(iv)$ follows by extending the domain of the first integral and the last identity is due to the choice of $P_{d - \left\lfloor d^{\alpha}\right \rfloor }$ and Lemma \ref{lem:firstProperties}. 
Therefore:
		\begin{align}
						&\Lambda_{\mu,d}(\x_0)N(d) - f(\x_0)\nonumber\\
						\leq\quad& \left(f(\x_0 ) + \epsilon_d + 2^{2 - d^{\alpha-1/2}/2}\sup_{\z \in Z}\left( \Lambda_{\umeas,d - \lfloor d^{\alpha}\rfloor}(\z)  \right)^{-1}\right)  \Lambda_{\umeas,d - \lfloor d^{\alpha}\rfloor}(\x_0)N(d) - f(\x_0)\nonumber\\
						=\quad& \left(\epsilon_d + 2^{2 - d^{\alpha-1/2}/2}\sup_{\z \in Z} \left(\Lambda_{\umeas,d - \lfloor d^{\alpha}\rfloor}(\z)  \right)^{-1}\right)  \Lambda_{\umeas,d - \lfloor d^{\alpha}\rfloor}(\x_0) N(d - \lfloor d^{\alpha}\rfloor)\frac{N(d)}{N(d - \lfloor d^{\alpha}\rfloor)} \nonumber\\
						& + f(\x_0) \left(  \Lambda_{\umeas,d - \lfloor d^{\alpha}\rfloor}(\x_0) N(d - \lfloor d^{\alpha}\rfloor)\frac{N(d)}{N(d - \lfloor d^{\alpha}\rfloor)}- 1 \right)\nonumber\\
						\leq\quad& \left(\epsilon_d + 2^{2 - d^{\alpha-1/2}/2}\sup_{\z \in Z}\left( \Lambda_{\umeas,d - \lfloor d^{\alpha}\rfloor}(\z)  \right)^{-1}\right)  \Lambda_{\umeas,d - \lfloor d^{\alpha}\rfloor}(\x_0) N(d - \lfloor d^{\alpha}\rfloor)\frac{N(d)}{N(d - \lfloor d^{\alpha}\rfloor)} \nonumber\\
						& + C \left|  \Lambda_{\umeas,d - \lfloor d^{\alpha}\rfloor}(\x_0) N(d - \lfloor d^{\alpha}\rfloor)\frac{N(d)}{N(d - \lfloor d^{\alpha}\rfloor)}- 1 \right|\nonumber\\
						\label{eq:asymp3}
		\end{align}
		where $C = \sup_{\x \in Z} f(\x)$. Now the upper bound in (\ref{eq:asymp3}) does not depend on $\x_0$ and goes to $0$ as $d \to \infty$ since as $d \to \infty$,
		\begin{align*}
						&\epsilon_d \to 0,\\
						&2^{2 - d^{\alpha-1/2}/2}\sup_{\z \in Z}\left( \Lambda_{\umeas,d - \lfloor d^{\alpha}\rfloor}(\z) \right)^{-1} \sim 
						2^{2 - d^{\alpha - 1/2}/2} N(d - \lfloor d^{\alpha}\rfloor)  \to 0,\\
						&\Lambda_{\umeas,d - \lfloor d^{\alpha}\rfloor}(\x_0) N(d - \lfloor d^{\alpha}\rfloor)\frac{N(d)}{N(d - \lfloor d^{\alpha} \rfloor)} \to 1,
	\end{align*}
	where the first limit is obtained by uniform continuity of $f$, the second limit comes from $N$ being a polynomial (by Assumption \ref{ass:densityEstimation}) and $\alpha > 1/2$, and the last one also follows from Assumption \ref{ass:densityEstimation} and the fact that $N$ is polynomial and $\alpha < 1$. As a result:
		
		\begin{align}
						\limsup_{d \to \infty} \sup_{\x \in Z} \Lambda_{\mu,d}(\x)N(d) - f(\x) \leq 0,
						\label{eq:asymp4}
		\end{align}
		which concludes the first part of the proof.
		
		\paragraph{The other direction:}
		To obtain the opposite direction inequality, we permute the role of $\mu$ and $\umeas$ which corresponds to a density $\tilde{f} = 1/f$ which remains positive and continuous on $Z$. We repeat similar arguments, 
		fix an arbitrary $d \in \NN^*$ and set
		\begin{align}
						\epsilon_d = \sup_{\x,\y \in Z,\,\|\x- \y\| \leq D / \sqrt{d}} |1/f(\x) - 1/f(\y)|\,,
						\label{eq:asymp5}
		\end{align}
		which again is well-defined   because $f$ is positive and continuous on the compact set $Z$ and so $1/f$ is uniformly continuous on $Z$, which ensures that the supremum is finite. Furthermore, $\epsilon_d \to 0$ as $d \to \infty$.
		
		Fix an arbitrary $\x_0 \in Z$ and the same $\alpha \in (1/2,1)$ as in the first part of the proof. Choose $P_{d}$ realizing the infimum for $\Lambda_{\mu,d }(\x_0)$. The polynomial $Q_{2\lfloor d^{\alpha}/2\rfloor} \colon \y \mapsto Q((\y-\x_0)/D)$ is the same as in the first part of the proof.
		The polynomial $P = P_{d}Q_{2\lfloor d^{\alpha}/2\rfloor}$ is of degree at most $d + \lfloor d^{\alpha}\rfloor$ and satisfies 
				\begin{align}
								\begin{cases}
									P(\x_0) = 1\\
									|P| \leq |P_{d}| \text{ on } Z\\
									P^2 \leq 2^{2 - d^{\alpha - 1/2}/2} \sup_{\x \in Z} P_{d}(\x)^2 \text{ on } Z \setminus B_{D/\sqrt{d}}(\x_0).
								\end{cases}
								\label{eq:asymp6}
				\end{align}
				As $P$ is feasible one may compute an upper bound for the Christoffel function associated with $\umeas$, by:
		\begin{align*}
						&\Lambda_{\umeas,d + \lfloor d^{\alpha}\rfloor}(\x_0) \\
						\overset{(i)}{\leq} \quad & \int_{Z} (P(\x))^2 d\umeas(\x)\\
						\overset{(ii)}{\leq} \quad & \int_{Z \cap \B(\x_0, D/\sqrt{d})} \frac{(P_{d}(\x))^2}{f(\x)} d\mu(\x) + 2^{2 - d^{\alpha - 1/2}/2} \sup_{\x \in Z} P_{d}(\x)^2 \int_{Z \setminus \B(\x_0, D/\sqrt{d})} d\umeas\\
						\overset{(iii)}{\leq} \quad & (1/f(\x_0 ) + \epsilon_d) \int_{Z \cap \B(\x_0, D/\sqrt{d})} (P_{d}(\x))^2 d\mu(\x) + 2^{2 - d^{\alpha - 1/2}/2} \sup_{\z \in Z}\left( \Lambda_{\mu,d}(\z)  \right)^{-1} \int_{Z} (P_{d}(\x))^2 d\mu(\x)\\
						\overset{(iv)}{\leq} \quad & \left(1/f(\x_0 ) + \epsilon_d + 2^{2 - d^{\alpha - 1/2}/2}\sup_{\z \in Z} \left(\Lambda_{\mu,d }(\z)  \right)^{-1}\right) \int_{Z} (P_{d}(\x))^2 d\mu(\x) \\
						= \quad & \left(1/f(\x_0 ) + \epsilon_d + 2^{2 - d^{\alpha - 1/2}/2}\sup_{\z \in Z}\left( \Lambda_{\mu,d}(\z)  \right)^{-1}\right)  \Lambda_{\mu,d}(\x_0).
		\end{align*}
		The inequality $(i)$ follows $P(\x_0) = 1$, $(ii)$ follows by decomposition of the integral over two domains and the uniform bounds in (\ref{eq:asymp2}), $(iii)$ follows by combining (\ref{eq:asymp5}), Lemma \ref{lem:firstProperties} and the fact that $\umeas$ is a probability measure on $Z$, $(iv)$ follows by extending the domain of the first integral and the last identity is due to the choice of $P_{d}$ and Lemma \ref{lem:firstProperties}. 

		Hence we have, 
		\begin{align}
						&\frac{1}{\Lambda_{\mu,d}(\x_0)N(d)} - \frac{1}{f(\x_0)}\nonumber\\
						\leq\quad& \left(\frac{1}{f(\x_0 )} + \epsilon_d + 2^{2 - d^{\alpha-1/2}/2}\sup_{\z \in Z}\left( \Lambda_{\mu,d}(\z)  \right)^{-1}\right)  \frac{1}{\Lambda_{\umeas,d + \lfloor d^{\alpha}\rfloor}(\x_0)N(d)} - \frac{1}{f(\x_0)}\nonumber\\
						=\quad& \left(\epsilon_d + 2^{2 - d^{\alpha-1/2}/2}\sup_{\z \in Z}\left( \Lambda_{\mu,d}(\z)  \right)^{-1}\right)  \frac{1}{\Lambda_{\umeas,d + \lfloor d^{\alpha}\rfloor}(\x_0) N(d + \lfloor d^{\alpha}\rfloor)}\frac{N(d + \lfloor d^{\alpha}\rfloor)}{N(d )} \nonumber\\
						& + \frac{1}{f(\x_0)} \left( \frac{1}{\Lambda_{\umeas,d + \lfloor d^{\alpha}\rfloor}(\x_0) N(d + \lfloor d^{\alpha}\rfloor)}\frac{N(d + \lfloor d^{\alpha}\rfloor)}{N(d )} - 1 \right)\nonumber\\
						\leq\quad& \left(\epsilon_d + C_22^{2 - d^{\alpha -1/2}/2}\sup_{\z \in Z}\left( \Lambda_{\umeas,d}(\z)  \right)^{-1}\right)\frac{1}{\Lambda_{\umeas,d + \lfloor d^{\alpha}\rfloor}(\x_0) N(d + \lfloor d^{\alpha}\rfloor)}\frac{N(d + \lfloor d^{\alpha}\rfloor)}{N(d )} \nonumber\\
						& + C_2 \left| \frac{1}{\Lambda_{\umeas,d + \lfloor d^{\alpha}\rfloor}(\x_0) N(d + \lfloor d^{\alpha}\rfloor)}\frac{N(d + \lfloor d^{\alpha}\rfloor)}{N(d )}- 1 \right|\nonumber\\
						\label{eq:asymp7}
		\end{align}
		where $C_2 = \sup_{\z \in Z} 1/f(\z)$. For the last inequality, we have used the fact $\mu$ has density $f$ with respect to $\lambda$, so that for all $\x \in Z$, $\inf_{\z \in Z} f(\z) \Lambda_{\umeas,d}(\x) = \Lambda_{\umeas,d}(\x) / C_2 \leq \Lambda_{\mu,d}(\x)$ and hence $\sup_{\z \in Z}\left( \Lambda_{\mu,d}(\z)  \right)^{-1} \leq C_2 \sup_{\z \in Z}\left( \Lambda_{\umeas,d}(\z)  \right)^{-1} $. Now the upper bound in (\ref{eq:asymp7}) does not depend on $\x_0$ and goes to $0$ as $d \to \infty$ since as $d \to \infty$,
		\begin{align*}
						&\epsilon_d \to 0,\\
						&2^{2 - d^{\alpha - 1/2}/2}\sup_{\z \in Z}\left( \Lambda_{\umeas,d }(\z) \right)^{-1} \sim 
						2^{2 - d^{\alpha - 1/2}/2} N(d )  \to 0,\\
						& \frac{1}{\Lambda_{\umeas,d + \lfloor d^{\alpha}\rfloor}(\x_0) N(d + \lfloor d^{\alpha}\rfloor)}\frac{N(d + \lfloor d^{\alpha}\rfloor)}{N(d )}\to 1.
	\end{align*}
	The first limit is obtained by uniform continuity of $1/f$, the second limit follows from $N$ being polynomial (by Assumption \ref{ass:densityEstimation}) and $\alpha > 1/2$, and the last one also follows from Assumption \ref{ass:densityEstimation}$N$ being a polynomial and $\alpha < 1$. Therefore:
		\begin{align}
						\limsup_{d \to \infty} \sup_{\x \in Z} \frac{1}{\Lambda_{\mu,d}(\x)N(d)} - \frac{1}{f(\x)} \leq 0,
						\label{eq:asymp8}
		\end{align}
		from which we deduce that 	
		\begin{align}
						\liminf_{d \to \infty} \inf_{\x \in Z} \Lambda_{\mu,d}(\x)N(d) - f(\x) \geq 0.
						\label{eq:asymp9}
		\end{align}
		Combining (\ref{eq:asymp4}) and (\ref{eq:asymp9}) concludes the proof.
\end{proof}
\begin{remark}[Proof of the convergence rate]
				In the case of the sphere, Assumption \ref{ass:densityEstimation} does not hold only in the limit but for all $d$ and 
				from the proof of Theorem \ref{th:asymptoticsUnitSphere} we are able
				to obtain  explicit quantitative bounds and a convergence rate. Suppose $f \colon \RR^p \to \RR_+$ with $0 < c \leq f \leq C$ on $\mathbb{S}^{p-1}$ and that $\mu$ has density $f$ with respect to $\lambda$, the uniform measure on the sphere. In this case, for all $d \in \NN$, $\Lambda_{\lambda,d} = 1/N(d)$, since, by isotropy $\Lambda_{\lambda,d}$ is constant on the sphere with $\int (\Lambda_{\lambda,d})^{-1}d\lambda = N(d)$. In addition, assume that both $f$ and $1/f$ are $L$-Lipschitz on the unit ball for some $L > 0$. Then we can choose in both cases of the proof $\epsilon_d \leq LD / \sqrt{d}$ and equation (\ref{eq:asymp3}) simplifies to:
				\begin{align*}
								&\Lambda_{\mu,d}(\x_0)N(d) - f(\x_0)\\
								\leq \quad& \left(\frac{2LD}{\sqrt{d}} + 2^{2 - d^{\alpha - 1/2}/2}N(d - \lfloor d^{\alpha}\rfloor)\right) \frac{N(d)}{N(d - \lfloor d^{\alpha}\rfloor)}
								+ C \left(  \frac{N(d)}{N(d - \lfloor d^{\alpha}\rfloor)}- 1 \right)\nonumber.\\
				\end{align*}
				Similary and equation (\ref{eq:asymp7}) yields
				\begin{align*}
								&\Lambda_{\mu,d}(\x_0)N(d) -f(\x_0)\\
								\geq \quad &- \Lambda_{\mu,d(\x_0)} N(d)f(\x_0) \left(\left(\frac{2LD}{\sqrt{d}} + \frac{2^{2 - d^{\alpha - 1/2}/2}}{c}N(d)\right)\frac{N(d + \lfloor d^{\alpha}\rfloor)}{N(d )} +\frac{1}{c} \left( \frac{N(d + \lfloor d^{\alpha}\rfloor)}{N(d )}- 1 \right)\right).\nonumber\\
								\geq \quad &- C^2 \left(\left(\frac{2LD}{\sqrt{d}} + \frac{2^{2 - d^{\alpha - 1/2}/2}}{c}N(d)\right)\frac{N(d + \lfloor d^{\alpha}\rfloor)}{N(d )} +\frac{1}{c} \left( \frac{N(d + \lfloor d^{\alpha}\rfloor)}{N(d )}- 1 \right)\right).\nonumber
				\end{align*}
				In both expressions, the last term is the leading term of order $d^{\alpha - 1}$ where $-1/2 < \alpha -1 < 0$.
				As a result we obtain an overall convergence rate of order $O(d^{\alpha - 1})$, which can be arbitrarily close $O(d^{-1/2})$, which to the best of our knowledge is the first estimate of this type in this context. We leave for future research the task of improving this rate. Note that in the the special case of Lipschitz densities, in the proof we need $ d^{\alpha}/\sqrt{d} \to +\infty$. We choose a decrease of order $1/\sqrt{d}$ to find the denominator. A slower decrease would lead to more slack for the choice of $\alpha$ but then the first term in both expressions in our explicit bounds becomes dominant of order bigger than $O(d^{-1/2})$. The choice of decrease of order $1/\sqrt{d}$ seems to achieve balance between the two terms using the proposed proof technique.
				\label{rem:explicitSphere}
\end{remark}

\subsection{Perturbation of the moment matrix}
\begin{proof}[Proof of Lemma \ref{lem:perturb}:]
			Since $A$ is positive semidefinite, the left-hand side must be greater than the right-hand side. In addition, the infimum in the right-hand side is attained at some $\x^*$
			which can be used in the right-hand side to show the result.
\end{proof}

\section*{Acknowledgements}
The research of J.B. Lasserre was funded by the European Research Council (ERC) under the European Union Horizon 2020 research and innovation program (grant agreement 666981 TAMING). This work was partially supported by CIMI (Centre International de Math\'ematiques et d'Informatique). Edouard Pauwels acknowledges the support of the French Agence Nationale de la Recherche (ANR) under references ANR-PRC-CE23 Masdol under grant ANR-PRC-CE23 and ANR-3IA Artificial and Natural Intelligence Toulouse Institute, the support of Air Force Office of Scientific Research, Air Force Material Command, USAF, under grant number FA9550-18-1-0226. Mihai Putinar is grateful to LAAS and Universite Paul Sabatier for support. Jean-Bernard Lasserre and Edouard Pauwels would like to thank Yousouf Emin for early discussions on the topic.

\appendix
\section{Introduction}
We collect below some supplementary information to the main body of the present article. Section \ref{sec:notations} provides additional details regarding the notations and constructions employed in the main text. More insights about the numerical simulations are given in Section \ref{sec:numericsAdd}. Technical Lemmas are presented in Section \ref{sec:lemmas}.
\section{Polynomials and the moment matrix}
\label{sec:notations}
In this section $\mu$ denotes a positive Borel measure on $\RR^p$ with finite moments and for any $d \in \NN$, $\MM_{\mu,d}$ denotes its moment matrix with moments of degree up to $2d$. As a matter of fact we will soon restrict our attention to probability measures, which is a minor constraint to impose on the constructs below.

Henceforth we fix the dimension of the ambient euclidean space to be $p$. For example, vectors in $\RR^p$ as well as $p$-variate polynomials with real coefficients. 
We denote by $X$ the tuple of $p$ variables $X_1, \ldots, X_p$ which appear in mathematical expressions involving polynomials. Monomials from the canonical basis of $p$-variate polynomials are identified with their exponents in $\NN^p$: specifically  $\alpha = (\alpha_i)_{i =1 \ldots p} 
\in \NN^p$ is associated to the  monomial $X^\alpha := X_1^{\alpha_1} X_2^{\alpha_2} \ldots X_p^{\alpha_p}$ of degree 
$\deg(\alpha) := \sum_{i=1}^p \alpha_i=\vert\alpha\vert$. The notations $\lgl$ and $\leqgl$ stand for the graded lexicographic order, a well ordering over $p$-variate monomials. This amounts to, first, use the canonical order on the degree and, second, break ties in monomials with the same degree using the lexicographic order with $X_1 =a, X_2 = b \ldots$ For example, the monomials in two variables $X_1, X_2$, of degree less than or equal to $3$ listed in this order are given by: $1,\,X_1, \,X_2, \,X_1^2, \,X_1X_2, \,X_2^2,\, X_1^3,\, X_1^2X_2,\, X_1X_2^2,\, X_2^3$. We focus here on the graded lexicographic order to provide a concrete example, but any ordering compatible with the degree would work similarly. 

By definition $\NN^p_d$ is the set $\left\{ \alpha \in \NN^p;\; \deg(\alpha) \leq d \right\}$, while $\RR[X]$  is the algebra of $p$-variate polynomials with real coefficients. The degree of a polynomial is the highest of the degrees of its monomials with nonzero coefficients\footnote{For the null polynomial, we use the convention that its degree is $-\infty$.}. The notation $\deg(\cdot)$ applies a polynomial as well as to an element of $\NN^p$. For $d \in \NN$, $\RR_d[X]$ stands for the set of $p$-variate polynomials of degree less than or equal to $d$. We set $s(d) = {p+d \choose d} = \dim \RR_d[X]$; this is of course the number of monomials of degree less than or equal to $d$. 

From now on  $\v_d(X)$ denotes the vector of monomials of degree less or equal to $d$ (sorted using $\leqgl$), i.e., $\v_d(X) := \left( X^\alpha \right)_{\alpha \in \NN^p_d} \in \RR_d[X]^{s(d)}$. With this notation, one can write a polynomial $P\in\RR_d[X]$ as $P(X) = \left\langle\p, \v_d(X)\right\rangle$ for some real vector of coefficients $\p = \left( p_{\alpha} \right)_{\alpha \in \NN_d^p} \in \RR^{s(d)}$ ordered using $\leqgl$. 
Given $\x = (x_i)_{i = 1 \ldots p} \in \RR^p$, $P(\x)$ denotes the evaluation of $P$ with respect to the assignments $X_1 = x_1, X_2 = x_2, \ldots X_p = x_{p}$. Given a Borel probability measure $\mu$ and $\alpha \in \NN^p$, $y_{\alpha}(\mu)$ denotes the moment $\alpha$ of $\mu$, i.e., $y_{\alpha}(\mu) = \int_{\RR^p} \x^{\alpha} d\mu(\x)$. 
Throughout the paper we will only consider rapidly decaying measures, that is measure whose all moments are finite.
For a positive Borel measure $\mu$ on $\R^p$ denote by ${\rm supp}(\mu)$ its support, i.e., the smallest closed set $\om\subset\R^p$ such that $\mu(\R^p\setminus\om)=0$.

\paragraph{Moment matrix}
The moment matrix of $\mu$, $\MM_{\mu,d}$, is a matrix indexed by monomials of degree at most $d$ ordered with respect to $\leqgl$. For $\alpha,\beta \in \NN^p_d$, the corresponding entry in $\MM_{\mu,d}$ is defined by $[\MM_{\mu,d}]_{\alpha,\beta} := y_{\alpha +\beta}(\mu)$, the moment 
$\int\x^{\alpha+\beta}d\mu$ of $\mu$. For example, in the case $p = 2$, letting $y_{\alpha} = y_{\alpha} (\mu)$ for $\alpha \in \NN_4^2$, one finds:
$$\MM_{\mu,2}: \quad\begin{array}{rccccccccc}
& & &1& X_1  & X_2  & X_1^2 & X_1 X_2 &  X_2^2 \\
 & & & \\
 1  &   \quad & &1 & y_{10} & y_{01} & y_{20}& y_{11}& y_{02} \\
X_1&   \quad &&y_{10} & y_{20} & y_{11}&y_{30} & y_{21}& y_{12} \\
X_2&   \quad & &y_{01} & y_{11} & y_{02}& y_{21} &y_{12}& y_{03} \\
X_1^2& \quad & &y_{20} & y_{30} & y_{21}& y_{40} & y_{31}& y_{22} \\
X_1X_2& \quad & &y_{11} & y_{21} & y_{12}& y_{31} & y_{22}& y_{13}\\
X_2^2& \quad & &y_{02} & y_{12} &y_{03}& y_{22} & y_{13}& y_{04}\\
\end{array}.$$
The matrix $\MM_{\mu,d}$ is positive semidefinite for all $d \in \NN$. Indeed, for any $p \in \RR^{s(d)}$, let $P \in \RR_d[X]$ be the polynomial with vector of coefficients $\p$; then $\p^T\MM_{\mu,d}\p = \int_{\RR^p} P(\x)^2 d\mu(\x) \geq 0$. We also have the identity $\MM_{\mu,d} = \int_{\RR^p} \v_d(\x) \v_d(\x)^T d\mu(\x)$ where the integral is understood element-wise. Actually it is useful to interpret the moment matrix as representing the bilinear form
\begin{align*}
				\left\langle \cdot, \cdot\right\rangle_\mu \colon \RR[X] \times \RR[X]& \mapsto \RR \\
				(P,Q)& \mapsto \int_{\RR^p} P(\x)Q(\x)d\mu(\x),
\end{align*}
restricted to polynomials of degree up to $d$. Indeed, if $\p,\q \in \RR^{s(d)}$ are the vectors of coefficients of any two polynomials $P$ and $Q$ of degree up to $d$, one has $\p^T \MM_{\mu,d} \q =  \left\langle P, Q\right\rangle_\mu$

\section{Numerical experiments}
\label{sec:numericsAdd}
This section provides additional details regarding numerical experiments.
\subsection{Rank of the moment matrix}
We choose four different subsets of $\RR^3$:
\begin{itemize}
				\item The unit cube $\left\{ x,y,z, |x| \leq 1, |y| \leq 1, |z|\leq 1 \right\}$.
				\item The 3 dimensional unit sphere $\left\{ x,y,z, x^2 + y^2 + z^2 =  1 \right\}$.
				\item The 3 dimensional TV screen $\left\{ x,y,z, x^6 + y^6 + z^6 - 2x^2y^2z^2 =  1 \right\}$.
				\item The 3 dimensional torus $\left\{ x,y,z, \left(x^2 + y^2 + z^2 + \frac{9}{16} - \frac{1}{16}\right)^2 - \frac{9}{16} (x^2 + y^2) =  0 \right\}$.
\end{itemize}
Among the above sets the first one is three dimensional while all the others are 
$2$-dimensional. The $2$-dimensional sets are displayed in Figure \ref{fig:rank}.
For each set, we sample 20000 points on it and compute the rank of the empirical moment matrix for different values of the degree. To perform this computation, we threshold the singular values of the design matrix consisting in the expansion of each data point in the multivariate Tchebychev polynomial basis.

\subsection{Density estimation on algebraic sets}
The following quantities are used in the litterature as divergences combined with density estimation techniques for the examples treated in the main article.
\begin{itemize}
				\item The quantity $\cos(\theta_1 - \theta_2)$ where $\theta_1$ and $\theta_2$ are angular coordinates of two points on the circle.
				\item The dot product on the sphere which generalizes the previous situation to larger dimensions, used in \cite{cuevas2006plugin}.
				\item The quantity $\cos\left( \sqrt{(\phi_1 - \phi_2)^2+(\psi_1 - \psi_2)^2 }  \right)$ where $\phi_1, \phi_2, \psi_1, \psi_2$ are angles which correspond to points on the torus in $\RR^4$, used in \cite{lovell2003structure}.
\end{itemize}

\section{Technical Lemmas}
\label{sec:lemmas}
We begin with the following simple Lemma.
\begin{lemma}
				Let $\mu$ be a Borel probability measure on $\RR^p$ and $S$ its support which is assumed to be a bounded subset of $\RR^p$.
				Then for any $d \in \NN^*$ and for any $P \in \RR_d[X]$,
				\begin{align*}
								\sup_{\x \in S} |P(\x)|^2 \leq \sup_{\z \in S} \Lambda_{\mu,d}^{-1}(\z) \int (P(\x))^2 d\mu(\x).
				\end{align*}
				\label{lem:firstProperties}
\end{lemma}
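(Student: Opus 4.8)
The plan is to derive the pointwise estimate directly from the variational definition \eqref{eq:christoffel} of the Christoffel function. First I would fix an arbitrary $P \in \RR_d[X]$ and an arbitrary point $\x \in S$. If $P(\x) = 0$, the contribution of $\x$ to the left-hand side is zero and nothing needs to be shown, so I may assume $P(\x) \neq 0$ and introduce the rescaled polynomial $Q := P/P(\x) \in \RR_d[X]$, which satisfies $Q(\x) = 1$ and is therefore feasible for the infimum defining $\Lambda_{\mu,d}(\x)$. This immediately yields $\Lambda_{\mu,d}(\x) \le \int Q(\z)^2\, d\mu(\z) = P(\x)^{-2}\int P(\z)^2\, d\mu(\z)$.

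Next I would use that $\x \in S = \supp(\mu) \subseteq V$, so \eqref{eq:nonZeroV} guarantees $\Lambda_{\mu,d}(\x) > 0$; the previous inequality can then be rearranged to $|P(\x)|^2 \le \Lambda_{\mu,d}(\x)^{-1}\int P(\z)^2\, d\mu(\z) \le \big(\sup_{\z \in S}\Lambda_{\mu,d}^{-1}(\z)\big)\int P(\z)^2\, d\mu(\z)$. Since the right-hand side no longer depends on $\x$, taking the supremum over $\x \in S$ on the left produces exactly the claimed bound. For completeness one may note that the right-hand supremum is finite: $S$ is closed by definition of the support and bounded by hypothesis, hence compact, and on $S$ one has $\Lambda_{\mu,d}^{-1}(\z) = \v_d(\z)^T \MM_{\mu,d}^\dagger \v_d(\z)$ by Lemma \ref{lem:pseudoInv} and \eqref{eq:nonZeroV}, a polynomial and therefore continuous.

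There is essentially no substantial obstacle here; the only points requiring a little care are treating the degenerate case $P(\x) = 0$ separately and verifying that $\Lambda_{\mu,d}$ does not vanish on $S$ so that the division by $\Lambda_{\mu,d}(\x)$ is legitimate. Both are dispatched by the observations above, the second being precisely the content of \eqref{eq:nonZeroV}.
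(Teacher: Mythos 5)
Your proof is correct and follows essentially the same route as the paper's: use the variational characterization of $\Lambda_{\mu,d}$ to get $\Lambda_{\mu,d}(\x) \leq \int P^2\,d\mu / P(\x)^2$, rearrange, and take suprema over $S$. The only difference is that you explicitly dispatch the degenerate case $P(\x)=0$ and justify positivity of $\Lambda_{\mu,d}$ on $S$, points the paper's terser proof leaves implicit.
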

\begin{proof}
				For any $\x \in S$ and $P \in \RR_d[X]$, one finds
				\begin{align*}
								\Lambda_{\mu,d}(\x) \leq \frac{\int (P(\z))^2 d \mu(\z)}{P(\x)^2}
				\end{align*}
				and
				\begin{align*}
								P(\x)^2 \leq \frac{\int (P(\z))^2 d \mu(\z)}{\Lambda_{\mu,d}(\x)}.
				\end{align*}
				The result follows by considering the supremum over $S$ on both sides.
\end{proof}
Finally, the following Lemma is a quantitative adaptation of \cite[Lemma 2.1]{kroo2012christoffel}.
\begin{lemma}	
	\label{lem:needlePoly}
	For any $d \in \NN^*$ and any $\delta \in (0,1)$, there exists a $p$-variate polynomial of degree $2d$, $Q$, such that
\[Q(0) \,=\, 1\,;\quad 	-1\,\leq \,Q \,\leq\, 1,\text{ on } \B\,;\quad 
		\vert Q\vert\,\leq\, 2^{1-\delta d} \text{ on } \B\setminus \B_{\delta}(0).\]
\end{lemma}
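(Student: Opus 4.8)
The plan is to reduce the $p$-variate statement to a one-dimensional fact by composing the classical Chebyshev polynomial with a carefully chosen quadratic in $\|x\|^2$, so that radial symmetry does all the work.

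First I would recall the properties of the degree-$d$ Chebyshev polynomial $T_d$ that I will use: $|T_d|\le 1$ on $[-1,1]$; $T_d(1)=1$ and $T_d$ is increasing on $[1,\infty)$; and $T_d(\cosh\theta)=\cosh(d\theta)$ for $\theta\ge 0$. Then, setting $\beta:=\dfrac{1+\delta^2}{1-\delta^2}>1$, I would define
$$Q(x)\;:=\;\frac{1}{T_d(\beta)}\,T_d\!\left(\frac{1+\delta^2-2\|x\|^2}{1-\delta^2}\right),$$
which is a $p$-variate polynomial of degree $2d$, since its argument is an affine function of $\|x\|^2$ with nonzero leading coefficient and $T_d$ has degree $d$. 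Writing $u(x)$ for the argument of $T_d$ above, one checks $u(0)=\beta$, hence $Q(0)=1$.

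Next I would carry out the two bounds. As $x$ ranges over $\B$, the quantity $\|x\|^2$ runs over $[0,1]$ and $u(x)$ decreases from $u(0)=\beta$ (at $x=0$) to $u=-1$ (at $\|x\|=1$), so $u(x)\in[-1,\beta]$ for all $x\in\B$; therefore $T_d(u(x))\in[-1,T_d(\beta)]$ and $-1\le Q\le 1$ on $\B$. For $x\in\B\setminus\B_\delta(0)$ we have $\|x\|^2\ge\delta^2$, which forces $u(x)\le\dfrac{1+\delta^2-2\delta^2}{1-\delta^2}=1$ while still $u(x)\ge -1$, so $|T_d(u(x))|\le 1$ and $|Q(x)|\le 1/T_d(\beta)$ there. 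It remains to bound $T_d(\beta)$ from below: using the identity $\cosh\theta=(1+\tanh^2(\theta/2))/(1-\tanh^2(\theta/2))$ one gets $\beta=\cosh\theta$ for $\theta=2\,\mathrm{arctanh}(\delta)$, hence $T_d(\beta)=\cosh(d\theta)\ge\tfrac12 e^{d\theta}$. Since $\mathrm{arctanh}(\delta)\ge\delta$ on $(0,1)$ and $e^2>2$, this yields $T_d(\beta)\ge\tfrac12 e^{2d\delta}\ge\tfrac12\,2^{d\delta}=2^{\delta d-1}$, so $|Q|\le 1/T_d(\beta)\le 2^{1-\delta d}$ on $\B\setminus\B_\delta(0)$, which is the desired estimate.

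The computation is entirely elementary, so the only delicate point — and the step I would flag as the crux — is the coupled calibration of $\beta$ and of the quadratic: $\beta$ must be large enough that $1/T_d(\beta)$ beats $2^{1-\delta d}$, yet small enough that the quadratic still maps $\B$ into $[-1,\beta]$ (keeping $T_d$ bounded there), and the radius $\delta$ must be exactly the level at which $u$ crosses $1$. The value $\beta=(1+\delta^2)/(1-\delta^2)$ is precisely the choice that makes all three requirements hold simultaneously, after which the exponential growth of $T_d$ off $[-1,1]$ supplies the factor $2^{\delta d}$.
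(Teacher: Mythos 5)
Your proof is correct and uses essentially the same idea as the paper: compose the Chebyshev polynomial $T_d$ with an affine function of $\|x\|^2$, calibrated so that the argument equals $1$ at $\|x\|=\delta$, and then exploit the exponential growth of $T_d$ outside $[-1,1]$ to bound the normalizing constant $T_d(\beta)$ from below by $2^{\delta d-1}$. The only difference from the paper is the normalization of the argument: you rescale so that it runs over the full interval $[-1,\beta]$ on $\B$, whereas the paper uses the unscaled $t\mapsto 1+\delta^2-t^2$ (whose range is $[\delta^2,1+\delta^2]$) and correspondingly normalizes by $T_d(1+\delta^2)$; and you bound $T_d(\beta)$ via the $\cosh$ parametrization and $\operatorname{arctanh}(\delta)\ge\delta$, while the paper uses the explicit radical formula for $T_d$ on $[1,\infty)$ together with concavity of $\log$. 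Both calibrations land on the same bound $2^{1-\delta d}$, so these are cosmetic variations of one argument.
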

\begin{proof}
	Let $R$ be the univariate polynomial of degree $2d$, defined by
	\begin{align*}
		R\colon t \to \frac{T_d(1+\delta^2 - t^2)}{T_d(1+\delta^2)},
	\end{align*}
	where $T_d$ is the Chebyshev polynomial of the first kind. We obtain
	\begin{align}
		\label{eq:needlePolyTemp1}
		R(0) = 1. 
	\end{align}
	Furthermore, for $t \in [-1,1]$, we have $0 \leq 1+\delta^2 - t^2 \leq 1+\delta^2$. $T_d$ has absolute value less than $1$ on $[-1,1]$ and is inceasing on $[1, \infty)$ with $T_d(1) = 1$, so for $t \in [-1,1]$,
	\begin{align}
		\label{eq:needlePolyTemp2}
		-1 \leq R(t) \leq 1.
	\end{align}
	For $|t| \in [\delta, 1]$, we have $\delta^2 \leq 1+\delta^2-t^2 \leq 1$, so
	\begin{align}
		\label{eq:needlePolyTemp3}
		|R(t)| \leq \frac{1}{T_d(1+\delta^2)}.
	\end{align}
	Let us bound the last quantity. Recall that for $t \geq 1$, we have the following explicit expresion 
	\begin{align*}
		T_d(t) = \frac{1}{2}\left( \left( t + \sqrt{t^2-1} \right)^d +  \left( t + \sqrt{t^2-1} \right)^{-d}\right).
	\end{align*}
	We have $1 + \delta^2 +\sqrt{(1+\delta^2)^2 - 1} \geq 1 + \sqrt{2} \delta$, which leads to
	\begin{align}
		\label{eq:needlePolyTemp4}
		T_d(1+\delta^2) &\geq \frac{1}{2}\left( 1 + \sqrt{2} \delta \right)^d\\
		&= \frac{1}{2} \exp\left( \log\left( 1+\sqrt{2}\delta \right) d \right)\nonumber\\
		&\geq\frac{1}{2} \exp\left( \log(1+\sqrt{2}) \delta d \right)\nonumber\\
		&\geq2^{\delta d - 1}, \nonumber
	\end{align}
	where we have used concavity of the $\log$ and the fact that $1+\sqrt{2} \geq 2$. It follows by combining (\ref{eq:needlePolyTemp1}), (\ref{eq:needlePolyTemp2}), (\ref{eq:needlePolyTemp3}) and (\ref{eq:needlePolyTemp4}), that $Q \colon \y \to R(\|\y\|_2)$ satisfies the claimed properties.
\end{proof}

\subsection{Proof of Lemma \ref{lem:irreducibleVanish}}
\begin{proof}
				The implication (ii) to (i) is trivial since $\sigma_W$ is supported on $W$. Let us assume that (i) is true and deduce (ii). This is classicaly formulated in the language of sheafs, we adopt a more elementary language and describe details for completeness. The main idea of the proof is to reduce the statement to R\"uckert's complex analytic Nullstellensatz \cite[Proposition 1.1.29]{huybrechts2006complex} which characterizes the class of analytic functions vanishing locally on the zero set of a family of analytic functions. The first point is precisely of local nature and the Nullstellensatz combined with properties of analytic functions allow to deduce properties of $P$ which extend globally \cite{serre1956geometrie}. The irreducibility hypothesis and the definition of the area measure allow precisely to switch from real to complex variables.

				First, let $P_1, \ldots, P_k$ generate the ideal $I$ of polynomials vanishing on $W$. Since $W$ is irreducible, $I$ is a prime ideal and $W = \left\{ \x \in \RR^p,\, P_i(\x) = 0,\, i =1,\ldots,k \right\}$ (see Propositions 3.3.14 and 3.3.16 in \cite{bochnak1998real}). Point (i) entails that there exists $\x_0 \in W$ and $U_1$ an Euclidean neighborhood of $\x_0$ in $\RR^p$ such that $W \cap U_1$ is an analytic submanifold, or more precisely the Jacobian matrix $\left( \frac{\partial P_i}{\partial X_j} \right)_{i=1\ldots k,\, j=1\ldots p}$ has rank $k$ on $U_1$, and furthermore, $P(\x) = 0$ for all $\x \in W \cap U_1$.

				Consider the complex analytic manifold $Z = \left\{ \z \in \CC^p,\, P_i(\z) = 0,\, i =1,\ldots,k \right\}$ and the polynomial map
				\begin{align*}
						G\colon (X_1,\ldots,X_p) \mapsto (P_1(X_1,\ldots,X_p),\ldots,P_k(X_1,\ldots,X_p),X_{k+1},\ldots, X_p).
				\end{align*}
				This map is locally invertible around $\x_0$ in $\CC^p$ and its inverse is analytic. The function $(H\colon X_{k+1},\ldots,X_p)\mapsto P(G^{-1}(0,\ldots,0,X_{k+1},\ldots,X_p)$ is analytic and vanishes in an Euclidean neighborhood, of $(x_{0,k+1},\ldots,x_{0,p})$, the last $k$ coordinates of $\x_0$, in $\RR^k$. Hence, it can be identified with the constant null function on $U_1$. This shows that $H$ vanishes in an Euclidean neighborhood of $(x_{0,k+1},\ldots,x_{0,p})$ in $\CC^k$.

								This proves that there exists a Euclidean neighborhood of $\x_0$ in $\CC^p$, $U_2$, such that $P$ vanishes on $Z \cap U_2$. At this point we can invoke the complex analytic Nullstellensatz \cite[Proposition 1.1.29]{huybrechts2006complex}, to obtain $k$ analytic functions, $O_1,\ldots,O_k$, on $U_2$, and an integer $m \geq 1$, such that, for all $\z \in U_2$, we have
				\begin{align}
								(P(\z))^m = \sum_{i=1}^k P_i(\z) O_i(\z).
								\label{eq:radicalMembership}
				\end{align}
				We can now use powerful result related to completion of local rings. Combining Corollary 1 of Proposition 4 and Proposition 22 in \cite{serre1956geometrie} we obtain that identity \eqref{eq:radicalMembership} still holds with the constraint that $O_i$, $i=1,\ldots, k$, are rational functions. In other words, reducing to common denominator, there exists an Euclidean neighobourhood of $\x_0$, $U_3$, and $k$ complex polynomials $Q_1,\ldots,Q_k$ and a complex polynomial $Q$ such that $Q$ does not vanish on $U_3$, and
				\begin{align}
								Q(\z)(P(\z))^m = \sum_{i=1}^k P_i(\z) Q_i(\z).
								\label{eq:radicalMembershipPoly}
				\end{align}
				From this we deduce that identity \eqref{eq:radicalMembershipPoly} still holds by restricting to real variables and real polynomials. Now since the ideal $I$ is prime and $Q \not\in I$ (since $Q(\x_0) \neq 0$), we deduce that $P \in I$, that is, $P$ vanishes on $W$. This is what we wanted to prove.
\end{proof}

\end{document}